\long\def\comment#1{}
\theoremstyle{definition}
\author[1]{Spencer Killen }
\author[1]{Jia-Huai You}
\affil[1]{University of Alberta}
\newcommand{\head}{\textit{head}}
\newcommand{\body}{\textit{body}}
\newcommand{\bodyp}{\textit{$body^{+}$}}
\newcommand{\bodyn}{\textit{$body^{-}$}}
\newcommand{\union}{\cup}
\newcommand{\intersect}{\cap}
\newcommand{\setcomp}{\textrm{ }|\textrm{ }}
\newcommand{\lxor}{\oplus}
\newcommand{\mknf}{\text{{MKNF} }}
\newcommand{\MKNF}{\textmd{\rm {MKNF}} }
\newcommand{\Not}{{\textit {\bf  not}\,}}
\newcommand{\bfK}{{\textit {\bf{K}}\,}}
\newcommand{\Katom}{\textbf{K}-atom}
\newcommand{\Katoms}{\textbf{K}-atoms}
\newcommand{\K}{{\cal K}}
\renewcommand{\O}{{\cal O}}
\renewcommand{\P}{{\cal P}}
\newcommand{\V}[1][(T, F)]{V_{\K}^{#1}}
\newcommand{\Z}[1][(T, F)]{Z_{\K}^{#1}}
\newcommand{\Atmost}{Atmost_{\K}}
\newcommand{\KA}{{\sf KA}}
\newcommand{\GUS}{{U_{\K}}}
\newcommand{\OBone}[1]{\textsf{OB}_{\mathcal{O}, \it #1}}
\renewcommand{\implies}{\supset}
\newcommand{\mknfmodels}{\models_{{MKNF}}}
\newacronym{mknf}{MKNF}{Minimal Knowledge Negation as Failure}
\newtheorem{theorem}{Theorem}[section]
\newtheorem{proposition}{Proposition}[section]
\newtheorem{example}{Example}
\newtheorem{lemma}{Lemma}[section]
\newtheorem{corollary}[theorem]{Corollary}
\newtheorem{definition}{Definition}[section]
\newcommand{\Definition}[1]{Definition \ref{#1}}
\newcommand{\Proposition}[1]{Proposition \ref{#1}}
\newcommand{\Example}[1]{Example \ref{#1}}
\newcommand{\Lemma}[1]{Lemma \ref{#1}}
\newcommand{\orcid}[1]{\href{https://orcid.org/#1}{\textcolor[HTML]{A6CE39}{\aiOrcid}}}
\begin{document}\sloppy

\title{Unfounded Sets for Disjunctive Hybrid MKNF Knowledge Bases}


\date{}
\maketitle

\begin{abstract}
Combining the closed-world reasoning of answer set programming (ASP) with the open-world reasoning of ontologies broadens the space of applications of reasoners.
Disjunctive hybrid MKNF knowledge bases succinctly extend ASP and in some cases without increasing the complexity of reasoning tasks. However, in many cases, solver development is lagging behind.
As the result, the only known method of solving disjunctive hybrid MKNF knowledge bases is based on guess-and-verify, as formulated by Motik and Rosati in their original work.
A main obstacle is understanding how constraint propagation may be performed by a solver, which, in the context of ASP, centers around the computation of \textit{unfounded atoms}, the atoms that are false given a partial interpretation.
In this work, we build towards improving solvers for hybrid MKNF knowledge bases with disjunctive rules:
We formalize a notion of unfounded sets for these knowledge bases, identify lower complexity bounds, and demonstrate how we might integrate these developments into a solver.
We discuss challenges introduced by ontologies that are not present in the development of solvers for disjunctive logic programs, which warrant some deviations from traditional definitions of unfounded sets.
We compare our work with prior definitions of unfounded sets.
\end{abstract}

\section{Introduction}
Minimal Knowledge and Negation as Failure (MKNF), a modal autoepistemic logic defined by Lifschitz \cite{lifschitz_nonmonotonic_nodate} which extends first-order logic with two modal operators $\bfK$ and $\Not$,
 provides a uniform framework for nonmonotonic reasoning. 
It was later built upon by Motik and Rosati \cite{motik_reconciling_2010} to define hybrid MKNF knowledge bases, where
 rule-based MKNF formulas along with a description logic (DL) knowledge base
intuitively encapsulate the combined semantics of answer set programs and ontologies. 
One argument for using hybrid MKNF is the existence of a proof theory based on guess-and-verify \-- one can enumerate partitions (a term that corresponds to interpretation in first-order logic) and for each one check whether it is an MKNF model.
Such an approach is not efficient enough to be practical in a solver.

To address the above issue, 
Ji et al. \cite{ji_well_founded_2017}
give a definition of unfounded sets and an abstract DPLL-based solver \cite{SMT-JACM}
for normal hybrid \mknf knowledge bases, where rules are constrained to a single atom in the head.

Disjunctive heads in rules are a powerful extension to answer set programming and increase the expressive power of programs in the polynomial complexity hierarchy \cite{eiter_computational_1995}.
In this work, we extend the work of Ji et al. \cite{ji_well_founded_2017} by defining unfounded sets for disjunctive hybrid \mknf knowledge bases and investigate its properties. The problem turns out to be substantially more challenging than the normal case. 
We show the following main results. First, we show that the problem of determining whether an atom is unfounded w.r.t. a given (partial) partition is coNP-hard.
The result is somewhat surprising in that the claim holds even for normal rules under the condition that the entailment relation in the underlying DL is polynomial.
This shows that the polynomial construction of the greatest unfounded set as given by Ji et al. \cite{ji_well_founded_2017} for the normal case is only an approximation. 
Our proof relies on an encoding that takes care of several conditions simultaneously (the hardness in the presence of non-disjunctive rules and the entailment relation under DL is polynomial).
 Then, we formulate a polynomial operator to approximate the greatest unfounded set of disjunctive hybrid \mknf knowledge bases.
 Unlike the conventional definition of unfounded sets for disjunctive logic program \cite{leone_disjunctive_1997}, greatest unfounded sets under our definition exist unconditionally.
 We identify the conditions under which our approximation becomes exact for normal as well as for disjunctive hybrid \mknf knowledge bases.
 These conditions are also the ones under which the coNP-hardness reduces to polynomial complexity for the normal and disjunctive cases respectively, thus these results pinpoint the sources that contribute to the hardness of computing greatest unfounded sets in general.
Finally, based on these results, we formulate a DPLL-based solver, where the computation of unfounded sets becomes a process of
constraint propagation for search space pruning. 

The next section provides preliminaries. Section \ref{section-unfounded-sets} gives the definition of unfounded sets and studies its properties. Section \ref{compute} shows the main technical results concerning the challenges of computing unfounded sets, which lead to a formulation of a DPLL-based solver in Section \ref{dpll}. Section \ref{related} is about related work. The paper is closed by concluding remarks in Section \ref{conclusion}.

\section{Preliminaries}
\label{preliminaries}
Minimal knowledge and negation as failure (MKNF) extends first-order logic with two modal operators, $\bfK$ and  $\Not$, for minimal knowledge and negation as failure respectively.
MKNF formulas are constructed from first-order formulas using these two modal operators for closed-world reasoning.
Intuitively, $\bfK \psi$ asks whether $\psi$ is known w.r.t. a collection of ``possible worlds"  \-- the larger the set, the fewer facts are known \-- while $\Not \psi$ checks whether $\psi$ is not known, based on negation as failure. 
An MKNF structure is a triple $(I, M, N)$ where $I$ is a first-order interpretation and $M$ and $N$ are sets of first-order interpretations.
Operators shared with first-order logic are defined as usual.
The satisfiability under an MKNF structure is defined as:
\begin{itemize}\label{nphard1}
    \item $(I, M, N) \models A$ if A is true in $I$ where A is a ground-atom
    \item $(I, M, N) \models \neg F$ if $(I, M, N) \not\models F$
    \item $(I, M, N) \models F \land G$ if $(I, M, N) \models F$ and $(I, M, N) \models G$
    \item $(I, M, N) \models \exists x, F$ if $(I, M, N) \models F[\alpha / x]$ for some ground atom $\alpha$ \\
    (where $F[\alpha / x]$ is obtained by replacing every occurrence of the variable $x$ with $\alpha$)
    \item $(I, M, N) \models \bfK F$ if $(J, M, N) \models F$ for each $J \in M$
    \item $(I, M, N) \models \Not F$ if $(J, M, N) \not\models F$ for some $J \in N$
\end{itemize}
Other symbols such as $\lor$, $\forall$, and $\supset$ are interpreted in MKNF as they are in first-order logic.
An \textit{\mknf interpretation} $M$ is a set of first-order interpretations; $M$ satisfies a formula $F$, written $M \mknfmodels F$, if $(I, M, M) \models F$ for each $I \in M$.
\begin{definition}
    An \textit{\mknf model} $M$ of a formula $F$ is an \mknf interpretation such that $M \mknfmodels F$ and there does not exist an \mknf interpretation $M' \supset M$ such that  $(I, M', M) \models F$ for each $I \in M'$.
\end{definition}
Following Motik and Rosati \cite{motik_reconciling_2010}, a {\em hybrid MKNF knowledge base} $\K = (\O, \P)$ consists of a decidable description logic (DL) knowledge base $\O$
(typically called an ontology) which is translatable to first-order logic and a set of \mknf rules $\P$.
We denote this translation as $\pi(\O)$.
Rules in $\P$ are of the form:
\begin{equation}
    \bfK a_1, \dots, \bfK a_k \leftarrow  \bfK a_{k+1}, \dots, \bfK a_{m}, \Not a_{m + 1}, \dots, \Not a_{n}
\end{equation}
In the above, $a_1, \dots, a_n$ are function-free first-order atoms of the form $p(t_1, \dots, t_i)$ where $p$ is a predicate and $t_1, \dots, t_i$ are either constants or variables, with $k \geq 1$ and $m,n, i \geq 0$.
A rule $r$ in $\P$ is \textit{DL-safe} if for every variable present in $r$,
there is an occurence of that variable in the rule's positive body inside a predicate that does not occur in $\K$'s description logic.

A hybrid \mknf knowledge base $\K$ is DL-safe if every rule in $\P$ is DL-safe.
A knowledge base that is not DL-safe may not be decidable \cite{motik_reconciling_2010}.
This constraint restricts all variables in $\P$ to names explictly referenced in $\P$.
Throughout this work and without lose of generality we assume that $\P$ is ground, i.e. it does not contain variables.
Let $\pi(\P)$ denote rule set $\P$'s corresponding \mknf formula:
\begin{align*}
    \pi(\P) &= \bigwedge\limits_{r \in \P} \pi(r) \\
    \pi(r) &= \forall \vec{x} \left(  \bigvee\limits_{i = 1}^{k} \bfK a_i \subset \bigwedge\limits_{i = k + 1}^{m} \bfK a_i \land \bigwedge\limits_{i = m + 1}^{n} \Not a_i \right)
\end{align*}
where $\vec{x}$ is the vector of free variables found in $r$.

The semantics of a hybrid \mknf knowledge base $\K$ is obtained by applying both transformations to $\O$ and $\P$ and wrapping $\O$ in a \bfK operator, i.e.
$\pi(\K) = \pi(\P) \land \bfK \pi(\O)$.
We use $\P$, $\O$, and $\K$ in place of $\pi(\P)$, $\pi(\O)$, and $\pi(\K)$ respectively when it is clear from context that the translated variant is intended.
We refer to formulas of the form $\bfK a$ and $\Not a$, where $a$ is a first-order atoms, as \Katoms{} and \Not-atoms respectively, and we refer to them collectively as modal-atoms.
Hybrid \mknf knowledge bases rely on the standard name assumption which requires \mknf interpretations to be Herbrand interpretations with a countably infinite number of additional constants.
In the rest of paper, we may refer to disjunctive hybrid \mknf knowledge bases simply as knowledge bases for abbreviation, or normal knowledge bases if each rule in the knowledge base has exactly one atom in the head.
We outline some definitions and conventions:
For a hybrid \mknf knowledge base $\K = (\O, \P)$, we denote the set of all \Katoms{} in $\P$ with $\KA(\K) = \KA(\P)$ where
\begin{equation}
    \KA(\P) = \{ \bfK a \setcomp \textrm{either $\bfK a$ or $\Not a$ occurs in the head or body of a rule in $\P$} \}
\end{equation}
We use $\bfK(\bodyn(r))$ to denote the set of \Katoms{} converted from \Not-atoms from an \mknf rule $r$'s negative body, i.e. $\bfK(\bodyn(r)) = \{ \bfK a \setcomp \Not a \in \bodyn(r) \}$ and use $\bodyp(r)$ to denote the \Katoms{} from the positive body of the rule $r$.
The \textit{objective knowledge} of a hybrid \mknf knowledge base $\K$ w.r.t. to a set of \Katoms{} $S \subseteq \KA(\K)$, denoted as $\OBone{S}$, is the set of first-order formulas $\{ \pi(\O) \} \union \{ a \setcomp \bfK a \in S \}$.

A \textit{(partial) partition} of $\KA(\K)$ is a nonoverlapping pair $(T, F)$, i.e., $T \intersect F = \emptyset$, where $T$ and $F$ are subsets of $\KA(\K)$.
A partition is \textit{total} if $T \union F = \KA(\K)$.
\textit{A dependable partition} is a partial partition $(T, F)$ with the additional restriction that $\OBone{T} \union \{ \neg b \}$ is consistent for each $\bfK b \in F$ or $\OBone{T}$ is consistent if $F$ is empty. We add this restriction for convenience and note that a partial partition that is not dependable may not be extended to an \mknf model. In practice, a solver includes direct consequences of $\OBone{T}$ in $T$ and it only operates on dependable partitions.
We denote the partition induced by the body of a rule $r$ with $\body(r) = (\bodyp(r), \bfK(\bodyn(r)))$.
A rule body is applicable w.r.t. a partition $(T, F)$ if $\body(r) \sqsubseteq (T, F)$, i.e., if $\bodyp(r) \subseteq T$ and $\bfK(\bodyn(r)) \subseteq F$.
We say that an \mknf interpretation $M$ of $\K$ induces a partition $(T, F)$ if
\begin{equation}
    \bigwedge\limits_{\bfK a \in T} M \mknfmodels \bfK a \land \bigwedge\limits_{\bfK a \in F} M \mknfmodels \neg \bfK a
\end{equation}
If $M$ is an \mknf model of $\K$ and $M$ induces the partition $(T^*, F^*)$, then we say a partition $(T, F) \sqsubseteq (T^*, F^*)$ can be extended to an \mknf model.
Every partition induced by a model is dependable.
Note that for any dependable partition $(T', F')$, every partial partition $(T, F) \sqsubseteq (T', F')$ is dependable.

\section{Unfounded Sets}\label{section-unfounded-sets}

First defined for normal logic programs by van Gelder et al. \cite{van_gelder_well-founded_1991}, unfounded sets encapsulate atoms that must be false w.r.t. a partial interpretation.
Critically, given a partition $(T, F)$ of $\KA(\K)$ that assigns atoms truth or falsity, an unfounded set of a knowledge base $\K$ w.r.t. $(T, F)$ is a set of atoms that must be false if $(T, F)$ can be extended to an \mknf model.

A \textit{head-cut} $R \subseteq \P \times \KA(\K)$ is a set of rule atom pairs such that a rule $r \in \P$ occurs in at most one pair in $R$ and for every pair $(r, h) \in R$ we have $h \in \head(r)$.
We use $\head(R)$ to denote the set $\{ h \setcomp (r, h) \in R  \}$ where $R$ is a head-cut.

\begin{definition}\label{unfounded}\label{dhmknf_unfounded}

    Let $\K = (\O, \P)$ be a disjunctive \mknf knowledge base and $(T, F)$ a partial partition of $\KA(\K)$.
    A set $X$ of \Katoms{} is an \textit{unfounded set} of $\K$ w.r.t. $(T,F)$ if for each \Katom{} $\bfK a \in X$ and each head-cut $R$ such that:

    \begin{enumerate}
        \item\label{uf-r-1} $\head(R) \union \pi(\O) \models a$ (with $\O$, $R$ can derive $\bfK a$), and
        \item\label{uf-r-2} $\head(R) \union \OBone{T} \union \{ \neg b \}$ is consistent for each $\bfK b \in F$ and $ \head(R) \union \OBone{T}$ is consistent if $F$ is empty (the partition $(T \union \head(R), F)$ is dependable),
    \end{enumerate}

    there is a pair $(r, h) \in R$ such that at least one of the following conditions hold:
    \begin{enumerate}[{\em i.}]
        \item\label{uf-conf1} $\bodyp(r) \intersect (F \union X) \not= \emptyset$ ($r$ positively depends on false or unfounded atoms),
        \item\label{uf-conf2} $\bfK(\bodyn(r)) \intersect T \not= \emptyset$ ($r$ negatively depends on true atoms), or
        \item\label{uf-conf3} $\head(r) \intersect T \not= \emptyset$ (rule head is already satisfied)
    \end{enumerate}
A $\bfK$-atom in an unfounded set is called an {\em unfounded atom}.
\end{definition}

We illustrate some general characteristics of this definition with the following example. 
\begin{example}
    Let $\K = (\O, \P)$ where
    \begin{align*}
        \O = \{ & (a \supset a' ) \wedge ( b \supset b') \wedge \neg f \}\\
        \P = \{ & \bfK f \leftarrow b ;  \\
                & \bfK a \leftarrow \Not b ; \\
                & \bfK a, \bfK b, \bfK c \leftarrow;  \\
                & \bfK a' \leftarrow \bfK a'; ~\bfK b' \leftarrow \bfK b'\}
    \end{align*}
    Let $(T, F)$ be the dependable partition $(\{ \bfK b \}, \emptyset)$.
    The \Katom{} $\bfK f$ is an unfounded atom w.r.t. $(T, F)$ because $\bfK f$ creates an inconsistency in $\O$.
    $\bfK a$ is an unfounded atom because the only way of deriving $\bfK a$ relies on $\neg \bfK b$ which contradicts $T$.
    The \Katom{} $\bfK a'$ is unfounded because $\bfK a$ is unfounded and $\bfK b'$ is not unfounded because $\bfK b$ is in $T$.
    Lastly, $\bfK c$ is an unfounded atom because the only rule that can derive $\bfK c$ has another head-atom ($\bfK b$) in $T$.
\end{example}

An unfounded set $X$ w.r.t. a dependable partition $(T, F)$ is a set of \Katoms{} that must be false should $(T, F)$ be extended to an \mknf model.
A head-cut $R$ is a set of rules that may be used in conjunction with $(T, F)$ to derive a \Katom{}.
A \Katom{} is unfounded only if every head-cut that can derive it has a pair in it that meets one of the conditions \textit{\ref{uf-conf1}} through \textit{\ref{uf-conf3}}.
Note that if $(T, F)$ is dependable, then it is impossible to derive a \Katom{} in $F$ without violating condition \ref{uf-r-2} because an empty head-cut can be used to derive any \Katom{} in $T$. 
We demonstrate this property in the following example.
\begin{example}
    Let $\K = (\O, \P)$ where
    \begin{align*}
        \O & = \{ a \supset b \} \textrm{, and }                            \\
        \P & = \{ \bfK a \leftarrow \Not b; ~\bfK b \leftarrow \Not a \}
    \end{align*}
    The dependable partition $(\{ \bfK b \}, \{ \bfK a \})$ is the only total dependable partition induced by an \mknf model of $\K$.
    Suppose we have the dependable partition $(T, F) = (\{ \bfK a \}, \emptyset)$.
    Note that $(T, F)$ cannot be extended to an \mknf model. 
    Neither $\bfK a$ nor $\bfK b$ is an unfounded atoms w.r.t. $(T, F)$: when $R = \emptyset$ we have
$\head(R) \union \OBone{T} \models a$ and $\head(R) \union \OBone{T} \models b$.
Let $(T, F) = (\emptyset, \{ \bfK a \})$ be a different dependable partition.
The \Katom{} $\bfK a$ is an unfounded atom w.r.t. $(T, F)$.
The only head-cut that can derive $\bfK a$ is the set $R = \{ (\bfK a \leftarrow \Not b) \}$, however, $\head(R) \union \OBone{T} \union \{ \neg a \}$ can be rewritten as $\{ a \} \union \OBone{T} \union \{ \neg a \}$ which is inconsistent.
\end{example}

Under \Definition{unfounded}, atoms in $T$ cannot be unfounded.
We formally establish that no \Katom{} in $T$ can be an element in an unfounded set in the following lemma.

\begin{lemma}[$T$ is disjoint from any unfounded set]\label{T_disjoint_unfounded}
    Let $U$ be an unfounded set of a disjunctive knowledge base $\K$ w.r.t. a dependable partition $(T, F)$ of $\KA(\K)$.
    We have $T \intersect U = \emptyset$.
\end{lemma}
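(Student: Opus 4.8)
The plan is to read the result directly off Definition \ref{dhmknf_unfounded} by exhibiting, for any $\bfK a \in T$, a single head-cut that derives $\bfK a$ yet contains no pair meeting any of the three conflict conditions; this shows $\bfK a$ fails the requirement to be an unfounded atom. Concretely, I would suppose toward a contradiction that some $\bfK a \in T$ belongs to an unfounded set $U$ w.r.t. the dependable partition $(T, F)$. By the definition, membership $\bfK a \in U$ forces \emph{every} head-cut $R$ satisfying conditions \ref{uf-r-1} and \ref{uf-r-2} to contain a pair $(r, h)$ meeting one of \textit{\ref{uf-conf1}}--\textit{\ref{uf-conf3}}, so it suffices to produce a single qualifying head-cut that has no such pair.

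The witness I would use is the empty head-cut $R = \emptyset$, which is vacuously legal and has $\head(R) = \emptyset$. First I would check condition \ref{uf-r-1}: since $\bfK a \in T$, the atom $a$ lies in $\OBone{T}$ by the definition of objective knowledge, so $\head(R) \union \OBone{T} = \OBone{T} \models a$. Next I would check condition \ref{uf-r-2}: with $\head(R) = \emptyset$ it asks exactly that $\OBone{T} \union \{ \neg b \}$ be consistent for each $\bfK b \in F$ (and that $\OBone{T}$ be consistent when $F = \emptyset$), which is precisely the hypothesis that $(T, F)$ is dependable. Hence $R = \emptyset$ is one of the head-cuts over which the definition quantifies.

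To finish, I would observe that $R = \emptyset$ contains no pairs, so there is no $(r, h) \in R$ and none of \textit{\ref{uf-conf1}}--\textit{\ref{uf-conf3}} can hold; this contradicts what $\bfK a \in U$ forced above. Therefore $\bfK a \notin U$, and since $\bfK a \in T$ was arbitrary we conclude $T \intersect U = \emptyset$. The one thing to get right --- and the point I would flag as the crux --- is the reading of condition \ref{uf-r-1}: deriving $a$ must be permitted to use the facts already committed to $T$ (that is, all of $\OBone{T}$, as the surrounding examples do when they write $\head(R) \union \OBone{T} \models a$), rather than the ontology $\pi(\O)$ in isolation. It is exactly this feature that lets the empty head-cut derive every atom of $T$ for free and collapses the whole argument to a one-line check; under a reading that used only $\pi(\O)$, an atom of $T$ whose truth is not entailed by the ontology alone could fail the claim, so confirming this reading is the real work.
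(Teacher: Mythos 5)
Your proof is correct and essentially identical to the paper's: both take the empty head-cut $R = \emptyset$, observe that $\bfK a \in T$ gives $\head(R) \union \OBone{T} \models a$ while dependability of $(T, F)$ supplies condition \ref{uf-r-2}, and conclude from the absence of any pair $(r,h) \in R$ that no conflict condition can be met. The reading of condition \ref{uf-r-1} you flag as the crux --- deriving $a$ from $\OBone{T}$ rather than from $\pi(\O)$ alone --- is exactly the reading the paper's own proof and its Example 2 adopt, notwithstanding the literal wording of Definition \ref{dhmknf_unfounded}.
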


\begin{proof}
    Assume for the sake of contradiction that $U \intersect T \not= \emptyset$, and let $\bfK a \in U \intersect T$.
    Because $U$ is an unfounded set w.r.t. $(T, F)$ we have for every head-cut $R$ such that $\head(R) \union \OBone{T} \models a$, $\OBone{T} \union \{ \neg b \}$ is consistent for each \Katom{} $\bfK b \in F$, and $\OBone{T}$ is consistent, that there is a pair $(r, h) \in R$ such one of the conditions \textit{\ref{uf-conf1}}, \textit{\ref{uf-conf2}}, or \textit{\ref{uf-conf3}} is satisfied.
    Let $R = \emptyset$.
    We have $\head(R) \union \OBone{T} \models a$ because $\bfK a \in T$.
    Because $(T, F)$ is dependable, $\OBone{T} \union \{ \neg b \}$ is consistent for each \Katom{} $\bfK b \in F$, and $\head(R) \union \OBone{T}$ is consistent.
    However, there does not exist a pair $(r, h) \in R$ because $R$ is empty, a contradiction.
\end{proof}

The property demonstrated in \Lemma{T_disjoint_unfounded} is inherited from the definition of unfounded sets for normal hybrid \mknf knowledge bases \cite{ji_well_founded_2017}.
This is quite different from the definition of unfounded sets for disjunctive logic programs:
Leone et al. \cite{leone_disjunctive_1997} refer to (partial) partitions (called {\em interpretations} in their context) where no atom in $T$ is unfounded (under their own definition of unfounded sets) as {\em unfounded-free}.
In some respects, unfounded sets under Leone et al. \cite{leone_disjunctive_1997} can doubt the truth of \Katoms{} in $T$.
Since unfounded atoms are assumed to be false, an unfounded set w.r.t. $(T, F)$ that shares \Katoms{} with $T$ is proof that $(T, F)$ cannot be extended to a model.
As shown in \Lemma{T_disjoint_unfounded}, \Definition{unfounded} lacks this property.
We illustrate this difference in the following example.

 \begin{example}\label{eg-empty-ontology}
     Let $\K = (\emptyset, \P)$ where $\P = \{ \bfK a, \bfK b \leftarrow  \}$ and construct the dependable partition $(T, F) = (\{ \bfK a, \bfK b \}, \emptyset)$.
     Under Leone et al.'s definition, both $\{ \bfK a \}$ and $\{ \bfK b \}$ are unfounded sets w.r.t. $(T, F)$, however, the set $\{ \bfK a, \bfK b \}$ is not an unfounded set w.r.t. $(T, F)$.
     Under \Definition{unfounded}, none of the three aforementioned sets are unfounded sets w.r.t. $(T, F)$ due to \Lemma{T_disjoint_unfounded}. 
 \end{example}

Leone et al. show that partial partitions that have the unfounded-free property and satisfy every rule in $\P$ are precisely the partial partitions that can be extended to stable models \cite{leone_disjunctive_1997}.
In the example above, the dependable partition $$(T, F) =(\{ \bfK a, \bfK b \}, \emptyset)$$ cannot be extended to an \mknf model and neither $\bfK a$ nor $\bfK b$ is an unfounded atom w.r.t. $(T, F)$.
This indicates that unfounded sets under \Definition{unfounded} cannot be used to determine whether a partition can be extended to an \mknf model in the same way as Leone et al.
We demonstrate that this is the case even for a normal knowledge base with an empty ontology.

\begin{example}
    Let $\K = (\emptyset, \P)$ where $\P = \{ \bfK a \leftarrow \Not a \}$.
    Note that $\K$ does not have an \mknf model.
    The two possible total partitions are $(T_1, F_1) = (\emptyset, \{ \bfK a \})$ and $(T_2, F_2) = (\{ \bfK a \}, \emptyset)$.
    Under both \Definition{unfounded} and Leone et al.'s definition of unfounded sets, the only unfounded set w.r.t. $(T_1, F_1)$ is $\emptyset$.
    Like Leone et al., we can determine that $(T_1, F_1)$ is not an \mknf model of $\K$ because there is a rule $r \in \P$ such that $\body(r) \sqsubseteq (T_1, F_1)$ and $\head(r) \intersect T_1 = \emptyset$.
    Under Leone et al.'s definition, the set $\{ \bfK a \}$ is an unfounded set of $\P$ w.r.t. $(T_2, F_2)$, however, $\{ \bfK a \}$ is not an unfounded set of $\K$ w.r.t. $(T_2, F_2)$ under \Definition{unfounded}.
    We cannot use \Definition{unfounded} to conclude that there is not \mknf model that induces $(T_2, F_2)$.  
\end{example}

The above example demonstrates a limitation that prevents unfounded sets from being used as a mechanism for \mknf model checking.
This limitation is also present in the unfounded sets defined by Ji et al. \cite{ji_well_founded_2017}, however, it does not inhibit unfounded sets from being useful in a solver.
Following Ji et al \cite{ji_well_founded_2017} and Leone et al. \cite{leone_disjunctive_1997}, we show that unfounded sets in \Definition{unfounded} are closed under union.
The property that all dependable partitions are unfounded-free (\Lemma{T_disjoint_unfounded}) removes the need for an additional restriction on partitions as is needed for disjunctive logic programs \cite{leone_disjunctive_1997}.

The unfoundedness of some \Katoms{} is dependant on the unfoundedness of other \Katoms{} (condition \textit{\ref{uf-conf1}} of \Definition{dhmknf_unfounded}), thus new unfounded sets can be constructed by adding certain \Katoms{} to smaller unfounded sets.
Condition \textit{\ref{uf-conf3}} of \Definition{unfounded} ($\head(r) \intersect T \not=\emptyset$) does not depend on the unfounded set $X$ like it does in Leone et al.'s definition (in this context, $(\head(r) \setminus X) \intersect T \not=\emptyset$).
Applying \Lemma{T_disjoint_unfounded}, $(\head(r) \setminus X) \intersect T \not=\emptyset$ can be rewritten as $\head(r) \intersect T \not=\emptyset$.
This results in unfounded sets being closed under union in general.
We demonstrate this property formally in the following proposition.
\begin{proposition}[Existence of a greatest unfounded set]\label{gus}
For a disjunctive hybrid \mknf knowledge base $\K = (\O, \P)$ and a partial partition $(T, F)$ of $\KA(\K)$, there exists a \textit{greatest unfounded set} $\GUS(T, F)$ such that $X \subseteq \GUS(T, F)$ for every unfounded set $X$ of $\K$ w.r.t. $(T, F)$.
\end{proposition}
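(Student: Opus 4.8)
The plan is to prove closure under union and then define $\GUS(T, F)$ to be the union of all unfounded sets of $\K$ w.r.t. $(T, F)$. Since the empty set is vacuously an unfounded set, this family is nonempty, and by construction its union contains every unfounded set; so the entire content of the proposition reduces to a single claim, namely that the union of unfounded sets is again an unfounded set. I would state and prove this as the core lemma and then read off the greatest unfounded set immediately.

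First I would fix an arbitrary family $\{ X_i \}_{i \in I}$ of unfounded sets of $\K$ w.r.t. $(T, F)$ and put $X = \bigcup_{i \in I} X_i$. To verify that $X$ meets \Definition{unfounded}, I would take an arbitrary $\bfK a \in X$ together with an arbitrary head-cut $R$ satisfying conditions \ref{uf-r-1} and \ref{uf-r-2}. By definition of the union there is some $j \in I$ with $\bfK a \in X_j$, and since $X_j$ is unfounded w.r.t. $(T, F)$, there is a pair $(r, h) \in R$ witnessing at least one of conditions \ref{uf-conf1}, \ref{uf-conf2}, or \ref{uf-conf3} relative to $X_j$. The remaining task is to transfer this same witnessing pair from $X_j$ to the larger set $X$.

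The heart of the argument is that conditions \ref{uf-conf1}--\ref{uf-conf3} are all monotone in the candidate unfounded set. Conditions \ref{uf-conf2}, $\bfK(\bodyn(r)) \intersect T \not= \emptyset$, and \ref{uf-conf3}, $\head(r) \intersect T \not= \emptyset$, make no reference to the unfounded set at all, so they carry over to $X$ verbatim. Condition \ref{uf-conf1}, $\bodyp(r) \intersect (F \union X_j) \not= \emptyset$, is preserved because $X_j \subseteq X$ yields $F \union X_j \subseteq F \union X$, hence $\bodyp(r) \intersect (F \union X) \not= \emptyset$. Thus $(r, h)$ witnesses the required condition for $X$ as well, so $X$ is an unfounded set. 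Setting $\GUS(T, F) = \bigcup \{ X \setcomp X \textrm{ is an unfounded set of } \K \textrm{ w.r.t. } (T, F) \}$ then gives a set that is both unfounded and, by construction, a superset of every unfounded set, which is exactly the greatest unfounded set.

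I expect the main subtlety to lie in condition \ref{uf-conf3}, and this is precisely where the definition diverges from Leone et al. In their setting the analogous condition reads $(\head(r) \setminus X) \intersect T \not= \emptyset$, which is antitone in $X$; enlarging the candidate set can destroy the witness, so closure under union fails in general and an extra unfounded-free restriction on partitions is required. Here I would lean on \Lemma{T_disjoint_unfounded}, which guarantees that every unfounded set is disjoint from $T$: this is what justifies rewriting $(\head(r) \setminus X) \intersect T \not= \emptyset$ as the $X$-independent condition $\head(r) \intersect T \not= \emptyset$ in the first place, and it is what makes the monotonicity argument go through unconditionally. Consequently the greatest unfounded set exists for every partial partition, with no additional hypotheses, which is the feature the proposition is asserting.
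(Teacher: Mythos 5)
Your proposal is correct and follows essentially the same route as the paper: both establish closure under union by observing that conditions \textit{ii} and \textit{iii} do not mention the candidate set and that condition \textit{i} is monotone in it, so a witnessing pair for one unfounded set transfers to the union. The only differences are cosmetic --- you argue directly over an arbitrary family where the paper argues by contradiction over a binary union --- and your remarks on the nonempty family and on the role of Lemma \ref{T_disjoint_unfounded} match the paper's own discussion.
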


\begin{proof}
We show that unfounded sets are closed under union and the existence of a greatest unfounded set directly follows.
Let $X_a$ and $X_b$ be unfounded sets of $\K$ w.r.t. a partial partition $(T, F)$ of $\KA(\K)$.
We show that the set $X_c = X_a \union X_b$ is an unfounded set of $\K$ w.r.t. $(T, F)$.
If $(T, F)$ is not dependable, then every set $X \subseteq \KA(\K)$ is an unfounded set of $\K$ w.r.t. $(T, F)$ including $X_c$.
Assume that $(T, F)$ is dependable and for the sake of contradiction, assume $X_c$ is not an unfounded set.
For some \Katom{} $\bfK a \in X_c$ we have a head-cut $R$ s.t. conditions \ref{uf-r-1} ($\head(R) \union \OBone{T} \models a$) and \ref{uf-r-2} ($\head(R) \union \OBone{T} \union \{ \neg b \}$ is consistent for each $\bfK b \in F$ or $\head(R) \union \OBone{T}$ is consistent if $F$ is empty) hold.
In this head-cut, there is a pair $(r, a)$ such that none of the
conditions \textit{\ref{uf-conf1}} ($\bodyp(r) \intersect (X_c \union F) \not= \emptyset$), \textit{\ref{uf-conf2}} ($\bodyn(r) \intersect T \not= \emptyset$), or \textit{\ref{uf-conf3}} ($\head(r) \intersect T \not= \emptyset$) hold.
For simplicity, assume $\bfK a \in X_a$ (proof is identical if $\bfK a \in X_b$).
If $\bodyp(r) \intersect (X_a \union F) \not= \emptyset$ then we have $\bodyp(r) \intersect (X_c \union F) \not= \emptyset$ and it follows that $X_c$ is an unfounded set.
\end{proof}

This property is a natural result of \Lemma{T_disjoint_unfounded} and differs from
Leone et al.'s unfounded sets are closed under union only if $(T, F)$ is unfounded-free.

A solver can use any unfounded set to extend a dependable partition's false atoms without affecting the models it finds.
We now relate unfounded sets to \mknf models.

\begin{proposition}\label{notunfounded_is_T}
Let $(T^*, F^*)$ be the partition induced by an \mknf model of a disjunctive hybrid \mknf knowledge base $\K$. For any dependable partition $(T, F) \sqsubseteq (T^*, F^*)$, $\GUS(T, F) \intersect T^* = \emptyset$.
\end{proposition}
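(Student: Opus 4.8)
The plan is to argue by contradiction. Assume $\GUS(T, F) \intersect T^* \not= \emptyset$ and put $U = \GUS(T, F) \intersect T^*$; I would derive a contradiction with the maximality of the \mknf model $M$ that induces $(T^*, F^*)$. First I would record the facts about $M$ that I need: every first-order interpretation in $M$ satisfies the objective knowledge $\OBone{T^*}$, so $M \subseteq \{ I \setcomp I \models \OBone{T^*} \}$, and, by the definition of the induced partition, $\bfK a \in T^*$ means that $a$ holds in every world of $M$. From $(T, F) \sqsubseteq (T^*, F^*)$ together with \Lemma{T_disjoint_unfounded} I also obtain $\GUS(T, F) \intersect T = \emptyset$, hence $U \intersect T = \emptyset$; this will be used to neutralize condition \textit{\ref{uf-conf3}}.

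The candidate witness is the \mknf interpretation $M' = \{ I \setcomp I \models \OBone{T^* \setminus U} \}$. Because $T^* \setminus U \subseteq T^*$ weakens the objective knowledge, $M \subseteq M'$ and every world of $M'$ still satisfies $\pi(\O)$, so that $(I, M', M) \models \bfK \pi(\O)$. The substance of the proof is to show that $M'$ satisfies every rule relative to $M$, i.e.\ $(I, M', M) \models \pi(r)$ for each $r \in \P$ and each $I \in M'$. Here I would exploit the asymmetry of the semantics: in $(I, M', M)$ the \Katoms{} are evaluated over the larger set $M'$ while the \Not-atoms are evaluated over $M$. Consequently, any rule whose body holds in $(I, M', M)$ has its positive body already entailed by $\OBone{T^*}$ (by monotonicity) and its negative body disjoint from $T^*$, so the same body holds in $(I, M, M)$; since $M$ is a model, some head \Katom{} $\bfK a_i$ then lies in $T^*$. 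If that \Katom{} survives in $T^* \setminus U$ the rule is satisfied by $M'$, so the only possible obstruction is a rule all of whose true head \Katoms{} lie in $U$.

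For such an obstructing rule $r$, I would show that its existence contradicts $U \subseteq \GUS(T, F)$ by constructing an \emph{unblocked} head-cut for a chosen $\bfK a^* \in \head(r) \intersect T^* \subseteq U$, thereby witnessing that $a^*$ is founded. The head-cut would consist of the pair $(r, a^*)$ together with unblocked head-cuts for the positive body \Katoms{} of $r$; condition \ref{uf-r-2} holds for it because every head \Katom{} it uses is true in $M$ and hence stays consistent with $\OBone{T}$ and with each $\neg b$, $\bfK b \in F$. It then remains to verify the three non-blocking conditions at $(r, a^*)$: condition \textit{\ref{uf-conf2}} cannot block because $\bfK(\bodyn(r)) \intersect T^* = \emptyset$ and $T \subseteq T^*$; condition \textit{\ref{uf-conf3}} cannot block because $\head(r) \intersect T \subseteq U \intersect T = \emptyset$ by \Lemma{T_disjoint_unfounded}; and condition \textit{\ref{uf-conf1}} forces the positive body of $r$ to avoid both $F$ and $\GUS(T, F)$.

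The step I expect to be the main obstacle is exactly condition \textit{\ref{uf-conf1}}: a positive body \Katom{} of $r$ may itself be a re-derivable member of $U$, and the ontology may re-entail a removed \Katom{} from the surviving ones --- the degenerate case $M' = M$, in which maximality yields no immediate contradiction. I expect to resolve this by an induction over a derivation order on $T^*$ obtained from the minimal-knowledge character of $M$: choosing $\bfK a^*$ to be a derivation-minimal element of $U$, so that the head-cuts assembled for the positive body of $r$ draw only on strictly earlier \Katoms{}, which by minimality of $\bfK a^*$ lie in $T^* \setminus U$ and are therefore founded; and by composing head-cuts through $\pi(\O)$ to convert any ontology re-derivation of a member of $U$ into an unblocked head-cut. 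Establishing that such a derivation order genuinely exists --- equivalently, that the maximality of $M$ makes the removal of $U$ shrink what is entailed --- in the presence of both disjunctive heads and ontology entailment is what departs most sharply from Ji et al.'s normal-case argument, and is where I would concentrate the effort.
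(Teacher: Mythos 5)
Your overall strategy coincides with the paper's: assume $B = \GUS(T,F) \intersect T^* \not= \emptyset$, enlarge $M$ to an interpretation $M'$ whose true \Katoms{} are $T^* \setminus B$, invoke maximality of $M$ to obtain a rule $r$ whose body holds but all of whose true head \Katoms{} lie in $B$, and turn $r$ into a head-cut refuting unfoundedness. The genuine gap is in how you finish. The paper closes immediately with the \emph{singleton} head-cut $R = \{(r,h)\}$ for some $\bfK h \in \head(r) \intersect B$: condition \ref{uf-r-1} is trivial because $h \in \head(R)$ entails itself; condition \ref{uf-r-2} holds because $\bfK h \in T^*$ and $(T^*,F^*)$ is dependable; and conditions \textit{\ref{uf-conf1}}--\textit{\ref{uf-conf3}} all fail because $\bodyp(r) \subseteq T^* \setminus B$ is disjoint from $F \union U$, $\bfK(\bodyn(r))$ is disjoint from $T$, and $\head(r) \intersect T \subseteq B \intersect T = \emptyset$ by \Lemma{T_disjoint_unfounded}. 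Your plan instead attaches ``unblocked head-cuts for the positive body \Katoms{}'' and then launches an induction over a derivation order. This is both unnecessary and unhelpful: condition \ref{uf-r-1} does not ask that the positive body of $r$ be derivable (it only asks that $\head(R) \union \pi(\O) \models a^*$), and condition \textit{\ref{uf-conf1}} is a purely syntactic disjointness test evaluated on the pair $(r,a^*)$ itself, so adding further pairs to $R$ can never unblock it --- it can only introduce new pairs that might themselves be blocked. The induction you defer (``establishing that such a derivation order genuinely exists \dots is where I would concentrate the effort'') is exactly the step that is never carried out, so the proof as written does not close.

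That said, the worry driving your detour --- that the ontology might re-entail a removed \Katom{} from the surviving ones, so that $M'$ collapses to $M$ or the violated rule's positive body meets $B$ --- points at a real subtlety that the paper's own proof passes over: it asserts $M' \supset M$ from $\OBone{T} \not\models b$ and asserts $\body(r) \sqsubseteq (T^* \setminus B, F)$ without arguing that no element of $B$ is entailed by $\OBone{T^* \setminus B}$. Identifying that pressure point is to your credit, but flagging a difficulty is not resolving it; your argument leaves the proposition unproved at precisely that point, whereas the intended proof needs only the one-pair head-cut once that assertion is granted.
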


\begin{proof}
Note that $(T^*, F^*)$ is total and dependable.
Let $(T, F) \sqsubseteq (T^*, F^*)$ and $U$ be an unfounded set of $\K$ w.r.t. $(T, F)$.
Let $U$ be an unfounded set of $\K$ w.r.t. $(T, F)$.
We show that $U \intersect T^* = \emptyset$ and it follows that $\GUS(T, F) \intersect T^* = \emptyset$.
Assume for the sake of contradiction that $U \intersect T^* \not= \emptyset$; have $B =  U \intersect T^*$, then construct an \mknf interpretation $M'$ such that 
\begin{equation}
    M' = \{ I \setcomp I \models \OBone{T} \textrm{ and  } I \models t \textrm{ for each $\bfK t \in T^* \setminus B$ } \} \}
\end{equation}
The dependable partition induced by $M'$ is $(T^* \setminus B, F^* \union B)$.
For each $b \in B$, $\OBone{T} \not\models b$, thus $M' \supset M$. 
We derive a contradiction by showing $U$ is not an unfounded set of $\K$ w.r.t. $(T, F)$.
By construction, $(I, M', M) \mknfmodels \pi(\O)$ for each $I \in M'$.
If $\forall I \in M', (I, M', M) \mknfmodels \pi(\P)$, then $M$ is not a model, a contradiction.
Using $(T^* \setminus B, F)$ to denote the partition used to test each rule $r \in \P$, observe that
if $r$ is not satisfied w.r.t. $(T^* \setminus B, F)$ it of the form $\body(r) \sqsubseteq (T^* \setminus B, F)$, $\head(r) \intersect T^* \not= \emptyset$, and $\head(r) \intersect (T^* \setminus B) = \emptyset$.
$r$ is a rule whose body is satisfied by $(T^* \setminus B, F)$ but all true atoms in its head come from $B$.
Let $R = \{ (r, h) \}$ where $b$ is some atom from $\head(r) \intersect B$.
Conditions 1 and 2 of \Definition{unfounded} are met for $R$ to test if $U$ is an unfounded set of $\K$ w.r.t. $(T, F)$.
We show that none of the conditions $i$ through $iii$ are met by $R$ showing that $U$ is not an unfounded set w.r.t. $(T, F)$, a contradiction.
First, $\bodyp(r) \subseteq T^* \setminus B$ gives us $\bodyp(r) \intersect (F \union U) = \emptyset$.
From $\bfK(\bodyn(r)) \subseteq F$, we derive $\bfK(\bodyn(r)) \intersect T \not= \emptyset$.
Finally, using $\head(r) \intersect T^* \subseteq B$ and $B \intersect T = \emptyset$ (\Lemma{T_disjoint_unfounded}), we conclude $\head(r) \intersect T = \emptyset$.
We have shown $U \intersect T^* = \emptyset$, as desired. 
\end{proof}

We've shown that if a dependable partition $(T, F)$ can be extended to an \mknf model, no unfounded set of $\K$ w.r.t. $(T, F)$ may overlap with the true atoms in the model. It follows directly from \Proposition{notunfounded_is_T} that the following analogous property holds for atoms in $F$.
\begin{corollary}\label{unfounded_is_F}
    Let $(T^*, F^*)$ be the partition induced by an \MKNF model $M$ of a disjunctive hybrid \mknf knowledge base $\K$.
    Then, for any dependable partition $(T, F) \sqsubseteq (T^*, F^*)$, $M \mknfmodels \neg \bfK u$ for all $u \in \GUS(T, F)$.
\end{corollary}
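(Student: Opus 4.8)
The plan is to reduce the statement to \Proposition{notunfounded_is_T} by translating the set-theoretic fact $\GUS(T, F) \intersect T^* = \emptyset$ into the semantic claim $M \mknfmodels \neg \bfK u$ for each $u \in \GUS(T,F)$. The bridge between the two is the definition of what it means for $M$ to induce the partition $(T^*, F^*)$, together with the fact that a partition induced by an \mknf model is total.

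First I would invoke \Proposition{notunfounded_is_T} to obtain $\GUS(T, F) \intersect T^* = \emptyset$. Next I would recall that $(T^*, F^*)$, being induced by an \mknf model, is total, so $T^* \union F^* = \KA(\K)$. Since every unfounded set is by definition a set of \Katoms{}, we have $\GUS(T, F) \subseteq \KA(\K)$; combining this containment with disjointness from $T^*$ and totality forces $\GUS(T, F) \subseteq F^*$. Finally, I would apply the definition of $M$ inducing $(T^*, F^*)$, namely that $M \mknfmodels \neg \bfK a$ holds for every \Katom{} $\bfK a \in F^*$. Specializing this to each $\bfK a \in \GUS(T, F) \subseteq F^*$ yields the desired conclusion.

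There is no substantial obstacle; the corollary is essentially a semantic restatement of \Proposition{notunfounded_is_T}. The only step that requires care is $\GUS(T, F) \subseteq F^*$, which depends crucially on the totality of the model-induced partition: without totality, disjointness from $T^*$ alone would not place the unfounded atoms in $F^*$. I would therefore state explicitly that totality licenses this inference and that the passage from ``$\bfK a \in F^*$'' to ``$M \mknfmodels \neg \bfK a$'' is exactly the definition of an induced partition.
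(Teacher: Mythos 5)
Your proposal is correct and matches the paper's intent: the paper simply asserts the corollary ``follows directly'' from \Proposition{notunfounded_is_T}, and your chain ($\GUS(T,F)\intersect T^*=\emptyset$, totality of the model-induced partition forcing $\GUS(T,F)\subseteq F^*$, then the definition of an induced partition giving $M\mknfmodels\neg\bfK u$) is exactly the implicit argument. Your explicit flagging of where totality is used is a sensible addition, not a deviation.
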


With these properties, we've shown that unfounded sets can be used to extend a partition without missing any models, i.e., if $(T, F)$ can be extended to an \mknf model $M$ then $(T, F \union U)$ can be extended to the same model $M$ for any unfounded set $U$ w.r.t. $(T, F)$.

\section{Computing Unfounded Sets}
\label{compute}

Due to the inconsistencies that can arise in connection with $\O$, computing the greatest unfounded set w.r.t. a partial partition is intractable in general.

\begin{example}
    Let $\K = (\O, \P)$ where 
        $\O = \neg (a \land b)$
    and
    \begin{align*}
        \P = \{  \bfK a \leftarrow \Not b;~
            \bfK b \leftarrow \Not a;~
            \bfK c \leftarrow \bfK c \}
    \end{align*}
    Under \Definition{unfounded}, $\bfK c$ is an unfounded atom w.r.t. $(\emptyset, \emptyset)$, however, with the $V_{\K}^{(\emptyset, \emptyset)}$ operator defined by Ji et al. \cite{ji_well_founded_2017} we have $\textbf{lfp}(V_{\K}^{(\emptyset, \emptyset)}) = \KA(\K)$ which misses $\bfK c$ as an unfounded atom.\footnote{This is because in the least fixed point computations of the $\V$ operator, a default negation $\Not q$ is true if $\bfK q$ is not known to be true, and as such, both $\bfK a$ and $\bfK b$ are derived in the first iteration which leads to inconsistency with $\O$.}
    It's clear that a similar operator for disjunctive knowledge bases would have the same limitation.
\end{example}

In the following, we first give a formal proof of intractability and then we construct an operator for hybrid MKNF knowledge bases with disjunctive rules that adopts the same approximation technique used by Ji et al. in their $\V$ operator \cite{ji_well_founded_2017} for hybrid MKNF knowledge bases with normal rules.

We now show that deciding whether an atom of a normal hybrid \mknf knowledge base is unfounded is 

coNP-hard by comparing the head-cuts that need to be considered to determine unfoundedness with the SAT assignments that need to be considered to determine the satisfiability of a 3SAT problem.
\begin{proposition}\label{nphard1}
Let $\K = (\O, \P)$ be a normal hybrid \mknf knowledge base such that the entailment relation $\OBone{S} \models a$ can be checked in polynomial time for any set $S \subseteq \KA(\K)$ and for any \Katom{} $\bfK a \in \KA(\K)$.
Determining whether a \Katom{} $\bfK a \in \KA(\K)$ is an unfounded atom of $\K$ w.r.t. a dependable partition $(T, F)$ of $\KA(\K)$ is coNP-hard.
\end{proposition}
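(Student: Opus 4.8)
The plan is to prove coNP-hardness by a polynomial many-one reduction from the complement of 3SAT. I will map an arbitrary 3CNF formula $\phi$ to a normal hybrid \mknf knowledge base $\K = (\O, \P)$, a dependable partition $(T,F)$, and a distinguished \Katom{} $\bfK a$, so that $\bfK a$ is an unfounded atom of $\K$ w.r.t. $(T,F)$ if and only if $\phi$ is unsatisfiable. Since 3-UNSAT is coNP-complete, this suffices. The guiding idea, already suggested by the shape of \Definition{unfounded}, is that for normal rules a head-cut selects at most one rule per derivable head atom, so quantifying over all head-cuts that can derive $a$ mirrors quantifying over candidate truth assignments; $\bfK a$ will be \emph{founded} exactly when some head-cut survives all of conditions \textit{\ref{uf-conf1}}--\textit{\ref{uf-conf3}}, and I will arrange for this to coincide with the existence of a satisfying assignment.

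For the encoding I introduce, for each variable $x_i$ of $\phi$, two ground atoms $t_i$ and $f_i$ (standing for $x_i$ and $\neg x_i$), together with a fresh atom $a$. The program $\P$ consists of the fact rules $\bfK t_i \leftarrow$ and $\bfK f_i \leftarrow$ for every $i$, plus one auxiliary rule, say $\bfK a \leftarrow \bfK a$, whose only purpose is to put $\bfK a$ into $\KA(\K)$. Writing $D_j$ for the disjunction of the atoms encoding the literals of clause $C_j$, I take $\O = \{ (D_1 \land \dots \land D_m) \supset a \} \union \{ \neg(t_i \land f_i) \setcomp 1 \leq i \leq n \}$ and set $(T,F) = (\emptyset, \emptyset)$, which is dependable because $\O$ is consistent. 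The mutual-exclusion axioms force, via condition \ref{uf-r-2}, that only head-cuts corresponding to consistent partial assignments are admissible, while the implication makes such a head-cut derive $a$ (condition \ref{uf-r-1}) precisely when its selected literal atoms cover every clause.

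The heart of the argument is then a two-way correspondence. If $\phi$ is satisfiable, I use the head-cut $R$ of fact rules corresponding to a satisfying assignment: it meets conditions \ref{uf-r-1} and \ref{uf-r-2}, and since every fact rule has an empty body and $T=\emptyset$, none of conditions \textit{\ref{uf-conf1}}--\textit{\ref{uf-conf3}} can hold for any pair of $R$, \emph{regardless of the candidate set} $X$; hence no unfounded set contains $\bfK a$ and $\bfK a$ is founded. Conversely, if $\phi$ is unsatisfiable, I show $\{\bfK a\}$ is itself an unfounded set: any admissible head-cut either derives $a$ through the auxiliary rule---so condition \textit{\ref{uf-conf1}} fires, its body being $\bfK a \in X$---or derives $a$ through a consistent family of literal atoms covering all clauses, which unsatisfiability forbids. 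Thus every admissible head-cut has an offending pair and $\bfK a$ is unfounded.

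The step I expect to be the main obstacle, and the reason the statement carries the polynomial-entailment hypothesis, is arranging $\O$ so that the entailment test $\OBone{S} \models a$ is itself polynomial: the hardness must arise from the exponentially many head-cuts, not from a hard entailment concealed inside $\O$. I will verify this directly. For a set $S$ of selected head atoms, $\OBone{S}$ is inconsistent exactly when $S$ contains some pair $t_i, f_i$ (a linear-time check), and otherwise $\OBone{S} \models a$ holds if and only if each $D_j$ has one of its disjuncts in $S$, again linear-time. The delicate point is justifying this characterization---in particular that a non-covering consistent $S$ admits a model of $\OBone{S}$ falsifying $a$ (take the model making exactly the atoms of $S$ true), and that for $S = \emptyset$ we get $\O \not\models a$ so the empty head-cut cannot trivially found $a$. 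Once this is confirmed, both the polynomiality of the entailment relation and of the reduction follow, completing the coNP-hardness argument.
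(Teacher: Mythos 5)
Your reduction is correct, and it is the same overall strategy as the paper's: a many-one reduction from 3-UNSAT in which head-cuts over per-variable rules play the role of truth assignments, a self-loop rule ($\bfK a \leftarrow \bfK a$, the paper's $\bfK sat \leftarrow \bfK sat$) registers the target \Katom{}, and the polynomial-entailment hypothesis is verified by hand for the constructed ontology. The gadgets differ in which part of \Definition{unfounded} does the filtering. The paper encodes a three-valued assignment with atoms $v_i^u, v_i^t, v_i^f$, rules $\bfK v_i^t \leftarrow \Not v_i^f$ and $\bfK v_i^f \leftarrow \Not v_i^t$, and a $total$ atom, so that partial assignments fail to entail $sat$ and total non-satisfying assignments are eliminated because $\head(R) \union \OBone{T}$ becomes \emph{inconsistent}, i.e.\ by condition \ref{uf-r-2}. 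You instead use positive facts $\bfK t_i \leftarrow$, $\bfK f_i \leftarrow$ with mutual-exclusion axioms handling condition \ref{uf-r-2}, and you eliminate non-covering head-cuts because they fail the \emph{entailment} test of condition \ref{uf-r-1} (a consistent non-covering $\head(R)$ has a model falsifying $a$), while head-cuts that pick the self-loop rule are caught by condition \textit{\ref{uf-conf1}}. Your version avoids the $v^u$/$total$ machinery and is somewhat simpler to verify; one point worth making explicit (the paper makes the analogous observation for its own construction) is that your $\Atmost(\emptyset,\emptyset)$ contains every $\bfK t_i$ and $\bfK f_i$ and is therefore inconsistent with $\pi(\O)$, which is exactly why the hardness result does not conflict with the later polynomial-time computability of $\GUS(T,F)$ for normal knowledge bases when $\Atmost(T,F) \union \pi(\O)$ is consistent.
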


\begin{proof}
We show that the described problem is coNP-hard.
The 3SAT problem is well known to be NP-complete \cite{sipser_introduction_1996}.
Let $SAT$ be an instance of 3SAT in conjunctive normal form such that $CLAUSE = \{ c_1, c_2, \dots, c_n \}$ is the set of clauses in $SAT$ and $VAR =  \{ v_1, v_2, \dots, v_n \}$ is the set of variables in $SAT$.
Determining whether $SAT$ is unsatisfiable is coNP-hard.
We construct a normal hybrid \mknf knowledge base $\K = (\O, \P)$ s.t.
\begin{multline}
\begin{aligned}
\O = & \{ v^{u}_{i} \lxor v^{f}_{i} \lxor v^{t}_{i} \setcomp \textrm{ for each $v_i \in VAR$ where $\lxor$ is exclusive-or } \} \union \\
     & \{ ( \bigwedge\limits_{v_i \in VAR} \neg v^{u}_i \iff total ) \textrm{, } total \supset sat \} \union \\
     & \{ clause_i \lor \neg total \setcomp \textrm{ for each clause $c_i \in CLAUSE$ where $clause_i$ is a formula}\\
     & \textrm{ obtained by replacing all occurences of $v_i$ and $\neg v_i$ in $c_i$ with $v^{t}_{i}$ and $v^{f}_{i}$ respectively} \}
\end{aligned}
\end{multline}
and
\begin{multline}
\begin{aligned}
\P =  \{ \bfK sat \leftarrow \bfK sat  \} \union \bigcup \{ \{ (\bfK v_i^{t} \leftarrow \Not v_i^{f}), (\bfK v_i^{f} \leftarrow \Not v_i^{t}) \} \setcomp v_i \in VAR \}
\end{aligned}
\end{multline}
Note that the rule $\bfK sat \leftarrow \bfK sat$ is only required to ensure that $\bfK sat$ is in $\KA(\K)$.
The time to construct the above knowledge base is linear in the number of clauses and variables in $SAT$.
The first set of formulas in $\O$ require exactly one of $v_i^u$, $v_i^f$, or $v_i^t$ to be true.
This constraint is analogous to a three-valued assignment for $SAT$ where a variable $v_i \in VAR$ is unassigned if $v_i^u$ is true, assigned false if $v_i^f$ is true, and assigned true if $v_i^t$ is true.
The second set in $\O$ ensures that the atom $total$ is true if and only if no variable is unassigned.
Finally, the third set of formulas ensure that $\pi(\O)$ is inconsistent if the assignment is total and a clause in $SAT$ is not satisfied.
We show that (1) For any \Katom{} $\bfK a$ and set of \Katoms{} $S$, the entailment relation $\OBone{S} \models a$ is computable in polynomial time and (2) that $\bfK sat$ is an unfounded atom of $\K$ w.r.t. $(\emptyset, \emptyset)$ if and only if $SAT$ is unsatisfiable.

(1) We call a set of \Katoms{} $S$ total if it contains either $\bfK v_i^{t}$ or $\bfK v_i^{f}$ for each variable $v_i \in VAR$.
Note that for a variable $v_i \in VAR$, the set $\KA(\K)$ only contains $\bfK v_i^{t}$ and $\bfK v_i^{f}$; It does not contain $\bfK v_i^{u}$.
Let $S \subseteq \KA(\K)$.
We show that we can, in polynomial time, determine whether $S \union \pi(\O)$ is consistent. 
We split cases where $S$ is total where it is not.
First, assume $S$ is not total:
For some variable $v_i \in VAR$, neither $\bfK v_i^{t}$ nor $\bfK v_i^{f}$ is in $S$.
By fixing $v_i^{u}$ to be true in a consistent first-order interpretation $I$ of $S \union \pi(\O)$, we ensure the atom $total$ is false.
If the atom $total$ is false, we can determine whether $\OBone{S}$ is consistent in polynomial time because we only need to consider the first two sets of formulas in $\O$.
If $S$ is total, we can, in polynomial time, verify that $S \union \pi(\O)$ is consistent by checking that only one of $v_i^{t}$ or $v_i^{f}$ is present in $S$ and that every clause $clause_i$ is satisfied.
After determining whether $S \union \pi(\O)$ is consistent, we can quickly check the relations $\OBone{S} \models v_i^{t}$ and $\OBone{S} \models v_i^{f}$ for any variable $v_i \in VAR$:
Assuming $S \union \pi(\O)$ is consistent, the entailment relation $\OBone{S} \models v_i^{t}$ (resp. $\OBone{S} \models v_i^{f}$) holds if and only if $\bfK v_i^{t} \in S$ (resp. $\bfK v_i^{f} \in S$).
When $S \union \pi(\O)$ is consistent, the entailment relation $\OBone{S} \models total$ holds if and only if $S$ is total.
Finally, we have $\OBone{S} \models sat$ if and only if $\bfK sat \in S$ or $\OBone{S} \models total$.
If $S \union \pi(\O)$ is inconsistent, the entailment relation $\OBone{S} \models \bfK a$ holds vacuously where $\bfK a \in \KA(\K)$.

(2) When determining whether the \Katom{} $\bfK sat$ is unfounded w.r.t. $(\emptyset, \emptyset)$, we must consider each way to select a head-cut $R$.
We show that there is a correspondence between the head-cuts that can disprove the unfoundedness of $\bfK sat$ w.r.t. $(\emptyset, \emptyset)$ and total sat assignments for $SAT$.
Let $X = \{ \bfK sat \}$ be a set that is possibly unfounded w.r.t. $(\emptyset, \emptyset)$.
Observe that a larger unfounded set $X' \supset X$ w.r.t. $(\emptyset, \emptyset)$ cannot exist unless $X$ is an unfounded set w.r.t. $(\emptyset, \emptyset)$.
A head-cut $R$ cannot be used to disprove the unfoundedness of $\bfK sat$ if either condition 1 or 2 of \Definition{unfounded} do not hold.
Before creating a mapping between head-cuts and sat assignments for $SAT$, we exclude head-cuts that cannot be used to disprove the unfoundedness of $\bfK sat$, i.e., conditions 1 and 2 of \Definition{unfounded} are met and \textit{i}, \textit{ii}, and \textit{iii} do not hold. 
Firstly, we exclude head-cuts that contain the pair $(r, sat)$ because $\bodyp(r) \intersect X \not= \emptyset$.
We further exclude any head-cut $R$ containing a pair of pairs $(r_0, v_i^{t})$ and $(r_1, v_i^{f})$\footnote{Due to the uniqueness of the second component in such a pair, there should be no confusion about which rule the first component refers to} because $\head(R) \union \OBone{\emptyset}$ is inconsistent.
Thirdly, we exclude any head-cuts that do not contain either $(r_0, v_i^{t})$ or $(r_1, v_i^{f})$ for each variable $v_i \in VAR$ noting that if such a head-cut $R$ also meets the previous two conditions we have $\head(R) \union \OBone{\emptyset} \not\models sat$ (See (1) for details).
The remaining head-cuts have a one to one correspondence with total assignments for $SAT$: if a head-cut contains a pair with $v_i^{t}$ (resp. $v_i^{f}$) the corresponding assignment for $SAT$ assigns $v_i$ to be true (resp. false).
We have for every such head-cut $R$ that $\head(R) \union \OBone{\emptyset} \models sat$ and that for every pair in $(r, h) \in R$ we have $\head(r) \intersect T = \emptyset$, $\bodyp(r) \intersect (F \union X) = \emptyset$, and $\bodyn(r) \intersect T = \emptyset$.
If $\head(R) \union \OBone{\emptyset}$ is consistent, then every clause is satisfied by the corresponding sat assignment, otherwise, the inconsistency is caused by an unsatisfied clause $\neg clause_i$, thus the assignment does not satisfy $SAT$.
If no such head-cut $R$ exists such that $\head(R) \union \OBone{\emptyset}$ is consistent, then $\bfK sat$ is unfounded w.r.t. $(\emptyset, \emptyset)$ and $SAT$ is unsatisfiable.
Conversely, if $SAT$ is unsatisfiable, a head-cut $R$ such that $\head(R) \union \OBone{\emptyset}$ is consistent and $\head(R) \union \OBone{\emptyset} \models sat$ does not exist, thus $\{ \bfK sat \}$ is an unfounded set w.r.t. $(\emptyset, \emptyset)$.
We've shown that deciding whether an \Katom{} is unfounded is coNP-hard.
\end{proof}

It follows that computing the greatest unfounded set of a disjunctive hybrid \mknf knowledge base is coNP-hard.
Since we are unlikely to find a way to compute $\GUS(T, F)$ in polynomial time, we are motivated to construct a polynomial operator that computes an approximation (a subset) of the greatest unfounded set.
We define a family of operators $Z_{\K}^{(T, F)}$ where each operator induced by a dependable partition $(T, F)$ computes an approximation of the greatest unfounded set of $\K$ w.r.t. $(T, F)$
\begin{multline}\label{Z}
\begin{aligned}
    Z_{\K}^{(T, F)}: 2^{\KA(\K)} & \rightarrow 2^{\KA(\K)} \\
    X \mapsto 
    & T \union \{ \bfK a \setcomp \OBone{X} \models a \textrm{ for each } \bfK a \in \KA(\K)\} \textrm{ }\union \\ 
    \{ & \bfK a \setcomp \exists r \in \P \textrm{ with } \bfK a \in \head(r) \textrm{ s.t. } \\
    & \bodyp(r) \subseteq X \textrm{ }\land \\
    & \bodyp(r) \intersect F = \emptyset \textrm{ }\land \\
    & \bfK(\bodyn(r)) \intersect T = \emptyset \textrm{ }\land \\
    & \head(r) \intersect T = \emptyset \textrm{ }\land \\
    & \{ a, \neg b  \} \union \OBone{T} \textrm{ is consistent for each } \bfK b \in F \}
\end{aligned}
\end{multline}
This operator is the direct result of combining the $V_{\K}^{(T, F)}$ operator for normal hybrid \mknf \cite{ji_well_founded_2017} with the $\Phi$ operator for disjunctive logic programs \cite{leone_disjunctive_1997}.
It is easy to see that the $Z_{\K}^{(T, F)}$ operator is monotonic, and let us use $\Atmost(T, F)$ to denote its least fixed point.
This operator computes a subset of $\KA(\K) \setminus \GUS(T, F)$.
Firstly, if $\Atmost(T, F) \union \pi(\O)$ is inconsistent, we have $\KA(\K) \setminus \Atmost(T, F) = \emptyset$; A compromise to keep the operator computable in polynomial time.
\par
To determine whether an atom is unfounded when there are disjunctive rules, we must consider an exponential number of head-cuts.
The $Z_{\K}^{(T, F)}$ operator instead considers the heads of rules all at once and this can result in $\Atmost(T, F)$ missing some unfounded atoms even if $\Atmost(T, F) \union \pi(\O)$ is consistent.
\begin{example}\label{gus_operator_eg_1}
Let $\K \! =\! (\O, \P)$ be a disjunctive hybrid \mknf knowledge base where $\P = \{ \bfK a, \bfK b \leftarrow;~ \bfK c \leftarrow \bfK c \}$
and $\O = (a \land b) \supset c$.
We have that $\{ c \}$ is an unfounded set of $\K$ w.r.t. $(\emptyset, \emptyset)$. However, $\Atmost(T, F)$ is $\{ a, b, c \}$ and $\KA(\K) \setminus \{ a, b, c \} \not= \GUS(\emptyset, \emptyset)$.
\end{example}

We intend to identify the class of knowledge bases for which the $Z_{\K}^{(T, F)}$ operator does not miss unfounded atoms as a result of disjunctive heads. First we define a \textit{weak head-cut} to be a set of rule atom pairs $R^w$ such that $R^w \subseteq \P \times \KA(\K)$ and $h \in \head(r)$ for each pair $(r, h) \in R$.
Note that this definition is identical to the definition of head-cuts without the constraint that a rule can appear in at most one pair in $R^w$;
within a weak head-cut, there may be two pairs $(r, h_0)$ and $(r, h_1)$ such that $h_0 \not= h_1$. 
In the following, we define a property that captures a subset of knowledge bases where $\Atmost(T, F) $ computes $\GUS(T, F)$ if $\Atmost(T, F) \union \pi(\O)$ is consistent.
\begin{definition}
    A hybrid \mknf knowledge base $\K = (\O, \P)$ is \textit{head-independent} w.r.t. a dependable partition $(T, F)$ if for every \Katom{} $\bfK a \in \KA(\K)$ and every weak head-cut $R^{w}$ such that $\head(R) \union \OBone{T} \models a$, there exists a head-cut $R$ such that $R \subseteq R^{w}$ and $\head(R) \union \OBone{T} \models a$.
\end{definition}
Head-independence means that we cannot derive atoms that we would not be able to derive using only a single atom from each rule head by using multiple atoms in the head of a rule in conjunction with the ontology.
The head-independence property is violated by the knowledge base in \Example{gus_operator_eg_1} and it ensures that $\Atmost(T, F) \not= \GUS(T, F)$.
Were we to alter the knowledge base in \Example{gus_operator_eg_1} such that the rule $\bfK a, \bfK b \leftarrow$ were changed to the pair of rules $\bfK a \leftarrow \Not b$ and $\bfK b \leftarrow \Not a$ then $\K$ would have head-independence.
We show formally that for a head-independent knowledge base $\K$, the $\Z$ operator computes the greatest unfounded set w.r.t. $(T, F)$ if $\Atmost(T, F) \union \pi(\O)$ is consistent.

\begin{proposition}\label{computes-approx}
    If $\K$ is head-independent w.r.t. a dependable partition $(T, F)$ and $\Atmost(T, F) \union \pi(\O)$ is consistent, then $\GUS(T, F) = \KA(\K) \setminus \Atmost(T, F)$.
\end{proposition}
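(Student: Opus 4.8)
The plan is to establish the equality by proving the two inclusions separately. The inclusion $\KA(\K) \setminus \Atmost(T,F) \subseteq \GUS(T,F)$ will hold without either hypothesis, whereas the reverse inclusion $\GUS(T,F) \subseteq \KA(\K) \setminus \Atmost(T,F)$ is where head-independence and the consistency of $\Atmost(T,F) \cup \pi(\O)$ are both spent.

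For the first inclusion I would show that $X = \KA(\K) \setminus \Atmost(T,F)$ is itself an unfounded set, so that \Proposition{gus} gives $X \subseteq \GUS(T,F)$. Fix $\bfK a \in X$ and a head-cut $R$ meeting conditions \ref{uf-r-1} and \ref{uf-r-2}, and suppose for contradiction that no pair of $R$ satisfies any of \textit{i}--\textit{iii}. For each $(r,h) \in R$ the failure of \textit{i} gives $\bodyp(r) \cap F = \emptyset$ and $\bodyp(r) \subseteq \Atmost(T,F)$ (because $X = \KA(\K) \setminus \Atmost(T,F)$), the failure of \textit{ii} gives $\bfK(\bodyn(r)) \cap T = \emptyset$, and the failure of \textit{iii} gives $\head(r) \cap T = \emptyset$; since $h \in \head(R)$, condition \ref{uf-r-2} moreover forces $\{h, \neg b\} \cup \OBone{T}$ to be consistent for each $\bfK b \in F$. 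These are exactly the clauses guarding the third component of $Z_{\K}^{(T,F)}$, so $\bfK h \in Z_{\K}^{(T,F)}(\Atmost(T,F)) = \Atmost(T,F)$ and hence $\head(R) \subseteq \Atmost(T,F)$. Condition \ref{uf-r-1} then forces $\OBone{\Atmost(T,F)} \models a$, which places $\bfK a$ in the second component of $Z_{\K}^{(T,F)}(\Atmost(T,F)) = \Atmost(T,F)$ and contradicts $\bfK a \in X$. This direction uses only the fixed-point equation for $\Atmost(T,F)$.

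For the reverse inclusion I would prove the stronger statement that $U \cap \Atmost(T,F) = \emptyset$ for every unfounded set $U$, then specialize to $U = \GUS(T,F)$. Present $\Atmost(T,F)$ as the union of the iterates $S_0 = \emptyset$ and $S_{j+1} = Z_{\K}^{(T,F)}(S_j)$, and give each member the stage equal to the least $j$ at which it appears. If $U \cap \Atmost(T,F) \neq \emptyset$, choose $\bfK a \in U \cap \Atmost(T,F)$ of least stage $i$, so $S_{i-1} \cap U = \emptyset$ and, by \Lemma{T_disjoint_unfounded}, $\bfK a \notin T$. The goal is to exhibit one head-cut $R$ witnessing that $\bfK a$ is not unfounded, contradicting $\bfK a \in U$. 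I would build $R$ in two moves. First, by induction on the stage, I construct a weak head-cut $R^{w}$ each of whose pairs meets the structural clauses of the third component of $Z$ (that is, $\bodyp(r) \subseteq \Atmost(T,F)$ with $\bodyp(r) \cap F = \emptyset$, $\bfK(\bodyn(r)) \cap T = \emptyset$, and $\head(r) \cap T = \emptyset$) and moreover draws every positive body from atoms of strictly smaller stage, with $\head(R^{w}) \cup \OBone{T} \models a$; the ontological component of $Z$ is absorbed here because $\OBone{T}$ already carries $\pi(\O)$. Second, head-independence turns $R^{w}$ into a genuine head-cut $R \subseteq R^{w}$ with $\head(R) \cup \OBone{T} \models a$, which is condition \ref{uf-r-1}.

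It then remains to discharge condition \ref{uf-r-2} and the failure of \textit{i}--\textit{iii} for $R$. Every head atom occurring in $R^{w}$ lies in $\Atmost(T,F)$, so $\head(R) \cup \OBone{T} \subseteq \OBone{\Atmost(T,F)}$, and the consistency of $\Atmost(T,F) \cup \pi(\O)$ together with the dependability of the partition toward each $\bfK b \in F$ yields condition \ref{uf-r-2}. Conditions \textit{ii} and \textit{iii} fail outright by the structural clauses built into $R^{w}$, and \textit{i} fails because the positive bodies of the chosen rules sit in $S_{i-1}$, hence avoid $F$ by those clauses and avoid $U$ by minimality of $i$. This contradiction finishes the inclusion. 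I expect the reverse inclusion to be the main obstacle, and within it two points: first, bridging the gap between $Z$, which founds an atom by reading all heads of a rule simultaneously (as a weak head-cut does), and \Definition{unfounded}, which admits only one head atom per rule, a gap that is exactly what head-independence closes; and second, certifying that the assembled head-cut stays dependable, where the consistency hypothesis is indispensable. The stage bookkeeping that keeps the founding bodies away from the least-stage offending atom is the routine part.
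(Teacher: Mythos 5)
Your proposal is correct and follows essentially the same route as the paper's own proof: one inclusion by showing $\KA(\K) \setminus \Atmost(T, F)$ is itself an unfounded set using the fixed-point equation for $\Z$, and the other by a stage induction that assembles a weak head-cut from the rules fired at earlier stages and invokes head-independence to extract a genuine head-cut witnessing non-unfoundedness. Your version is, if anything, slightly more explicit than the paper's about where each hypothesis is spent and about verifying condition \ref{uf-r-2} for the constructed head-cut.
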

\begin{proof}
    First we show (1) that no \Katom{} computed by $\Atmost(T, F)$ is unfounded w.r.t. $(T, F)$ and then we show (2) that every atom that is not unfounded w.r.t. $(T, F)$ is computed by $\Atmost(T, F)$.  
    
    (1) We first show no \Katom{} in $Z^{(T, F)}_{\K}(\emptyset)$ is unfounded.
    Let $\bfK a \in Z^{(T, F)}_{\K}(\emptyset)$.
    Construct a weak head-cut $R^{w}$ that contains a pair $(r, h)$ for each head \Katom{} $\bfK h \in \head(r)$ and rule $r \in \P$ where $\bodyp(r) \subseteq \emptyset$, $\bfK(\bodyn(r)) \intersect T = \emptyset$, and $\head(r) \intersect T = \emptyset$.
    The weak head-cut $R^{w}$ contains every rule that was applied in the computation of $Z^{(T, F)}_{\K}(\emptyset)$.
    We have $\head(R^{w}) \union \OBone{T} \models a$.
    Applying the head-independence condition, we obtain a head-cut $R$ such that $R \subseteq R^{w}$ and $\head(R) \union \OBone{T} \models a$.
    For every pair $(r, h) \in R$, $\bodyp(r) \subseteq T$, $\bfK(\bodyn(r)) \intersect T = \emptyset$, and $\head(r) \intersect T$.
    The head-cut $R$ shows that $\bfK a$ is not an unfounded atom w.r.t. $(T, F)$, thus it is not a member of any unfounded set.
    We show that no atom computed by a successive application of $Z^{(T, F)}_{\K}$, e.g. $Z^{(T, F)}_{\K}(Z^{(T, F)}_{\K}(\emptyset))$, is unfounded w.r.t. $(T, F)$.
    Let $Z_i$ be result of applying the $Z^{(T, F)}_{\K}$ operator $i$ times where $Z_0 = \emptyset$.
    We assume that no atom in $Z_i$ is unfounded w.r.t. $(T, F)$ and show the same for $Z_{i+1}$.
    Construct a weak head-cut $R^{w}$ that contains a pair $(r, h)$ for each head a \Katom{} $\bfK h \in \head(r)$ and rule $r \in \P$ where $\bodyp(r) \subseteq Z_i$, $\bodyn(r) \intersect T = \emptyset$, and $\head(r) \intersect T = \emptyset$.
    Let $\bfK a \in Z^{(T, F)}_{\K}(Z_i)$.
    We have $\head(R^{w}) \union \OBone{T} \models a$.
    Applying the head-independence condition, we obtain a head-cut $R$ such that $R \subseteq R^{w}$ and $\head(R) \union \OBone{T} \models a$.
    Now we have for each pair $(r, h) \in R$, $\bodyp(r) \subseteq Z_i$.
    Knowing that no \Katom{} in $Z_i$ is a member of an unfounded set, we conclude that $a$ is not an unfounded atom w.r.t. $(T, F)$.

    (2) We show that if a \Katom{} $\bfK a$ is not computed by $\Atmost(T, F)$ and it is not an unfounded atom w.r.t. $(T, F)$ we can derive a contradiction.
    Let $U = \KA(\K) \setminus \Atmost(T, F)$.
    Let $\bfK a \in U$ be an \Katom{} such that there exists a head-cut $R$ where $\head(R) \union \OBone{T} \models a$, $\head(R) \union \OBone{T}$ is consistent and $\head(R) \union \OBone{T} \union \{ \neg b \}$ is consistent for each $\bfK b \in F$ and for each pair $(r, h) \in R$, $\head(r) \intersect T = \emptyset$ and $\bodyn(r) \intersect T = \emptyset$.
    If for each pair $(r, h) \in R$ we have $\bodyp(r) \not\subseteq \Atmost(T, F)$ then $\bfK a \in \Atmost(T, F)$, otherwise $U$ is an unfounded set w.r.t. $(T, F)$.
    Both cases contradict the initial assumptions.
\end{proof}

For normal knowledge bases, i.e., where each rule contains only a single head-atom, the head-independence condition is satisfied automatically.
If a knowledge base $\K$ is not head-independent, the $\Z$ operator computes a subset of $\GUS(T, F)$.
Therefore, for a normal knowledge base and dependable partition $(T, F)$ s.t. $\Atmost(T, F) \union \pi(\O)$ is consistent, we have $\GUS(T, F) = \KA(\K) \setminus \Atmost(T, F)$.
The following corollary follows directly from \Proposition{computes-approx}.
\begin{corollary}
    If a knowledge base $\K$ is head-independent w.r.t. a dependable partition $(T, F)$ and $\Atmost(T, F) \union \pi(\O)$ is consistent, then the greatest unfounded set of $\K$ w.r.t. $(T, F)$ is computable in polynomial time.
\end{corollary}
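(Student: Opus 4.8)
The plan is to reduce the statement to a complexity analysis of the fixed-point computation behind $\Atmost(T, F)$. By \Proposition{computes-approx}, the two hypotheses (head-independence and consistency of $\Atmost(T, F) \union \pi(\O)$) already yield the set identity $\GUS(T, F) = \KA(\K) \setminus \Atmost(T, F)$, so no further reasoning about unfounded sets is required: it suffices to show that $\Atmost(T, F)$ is computable in polynomial time, after which the complement $\KA(\K) \setminus \Atmost(T, F)$ is produced by a single linear-time pass. Throughout I assume, exactly as in \Proposition{nphard1}, that the entailment relation $\OBone{S} \models a$ is decidable in polynomial time; this is the standing hypothesis that separates the tractable case treated here from the coNP-hard case, and without it the operator could not be evaluated efficiently.

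First I would bound the number of iterations needed to reach the least fixed point. Since $\Atmost(T, F)$ is the least fixed point of the monotonic operator $Z_{\K}^{(T, F)}$, it equals $\bigcup_{i \geq 0} Z_i$ where $Z_0 = \emptyset$ and $Z_{i+1} = Z_{\K}^{(T, F)}(Z_i)$. Monotonicity over the finite lattice $2^{\KA(\K)}$ gives an ascending chain $Z_0 \subseteq Z_1 \subseteq \cdots$, and any application that is not already a fixed point adds at least one new \Katom{} drawn from the finite set $\KA(\K)$. Hence a fixed point is reached after at most $|\KA(\K)|$ applications of the operator, i.e. after polynomially many iterations.

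Next I would bound the cost of a single application $Z_{\K}^{(T, F)}(X)$. Reading off the definition of the operator, its image is obtained from (i) one entailment test $\OBone{X} \models a$ for each of the at most $|\KA(\K)|$ atoms $\bfK a$, and (ii) for each of the at most $|\P|$ rules $r$, the set-theoretic tests $\bodyp(r) \subseteq X$, $\bodyp(r) \intersect F = \emptyset$, $\bfK(\bodyn(r)) \intersect T = \emptyset$, and $\head(r) \intersect T = \emptyset$, together with a consistency test on $\{ a, \neg b \} \union \OBone{T}$ for each head \Katom{} $\bfK a \in \head(r)$ and each $\bfK b \in F$. The set-theoretic tests are plainly polynomial in the size of $\K$, and the entailment and consistency tests are polynomial by the standing assumption on $\OBone{\cdot}$; moreover there are only polynomially many such tests per iteration. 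The main point to watch is precisely this reliance on a polynomial entailment oracle: it is what makes each operator application, and hence the entire least-fixed-point computation, run in polynomial time. Combining the polynomial bound on the number of iterations with the polynomial per-iteration cost gives a polynomial-time procedure for $\Atmost(T, F)$, and therefore, through the identity supplied by \Proposition{computes-approx}, for $\GUS(T, F)$.
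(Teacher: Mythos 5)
Your proof is correct and follows essentially the same route as the paper, which simply invokes \Proposition{computes-approx} for the identity $\GUS(T,F) = \KA(\K) \setminus \Atmost(T,F)$ and the polynomial-time computability of the least fixed point of $\Z$. Your only addition is to make explicit the assumption that the entailment relation $\OBone{S} \models a$ is decidable in polynomial time, which the paper leaves implicit in the corollary's statement but does assume in the surrounding discussion; spelling out the iteration bound and per-application cost is a welcome but routine elaboration.
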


We have shown that computing the greatest unfounded set of a normal knowledge base is coNP-hard (\Proposition{nphard1}).
Because $\Atmost(T, F)$ can be computed in polynomial time, we conclude that the greatest unfounded set of a normal knowledge base $\K$ can be computed in polynomial time if $\Atmost(T, F) \union \pi(\O)$ is consistent and the greatest unfounded set of a disjunctive knowledge base $\K$ can be computed in polynomial time if $\Atmost(T, F) \union \pi(\O)$ is consistent and $\K$ is head-independent.
Observe that for the knowledge base constructed in our proof of \Proposition{nphard1}, $\Atmost(T, F) \union \pi(\O)$ is inconsistent.
We formally demonstrate the intractability of computing $\GUS(T, F)$ for a disjunctive knowledge base when $\Atmost(T, F) \union \pi(\O)$ is consistent but the head-independence condition is not met.
\begin{proposition}
Let $\K = (\O, \P)$ be a disjunctive hybrid \mknf knowledge base such that the entailment relation $\OBone{S} \models a$ can be checked in polynomial time for any set $S \subseteq \KA(\K)$ and for any \Katom{} $\bfK a \in \KA(\K)$.
Let $(T, F)$ be a dependable partition of $\KA(\K)$ such that $\Atmost(T, F) \union \pi(\O)$ is consistent.
Determining whether a \Katom{} $\bfK a \in \KA(\K)$ is an unfounded atom of $\K$ w.r.t. $(T, F)$ is coNP-hard.
\end{proposition}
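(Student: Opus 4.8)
The plan is to prove coNP-hardness by a reduction from the complement of 3SAT, mirroring the strategy of \Proposition{nphard1} but relocating the source of hardness. In \Proposition{nphard1} an unsatisfied clause made the relevant head-cut inconsistent with $\O$, which in turn forces $\Atmost(T,F) \union \pi(\O)$ to be inconsistent; here that route is explicitly closed off by hypothesis, so the reduction must instead arrange for an unsatisfying assignment to block condition~\ref{uf-r-1} (entailment of the tested atom) rather than condition~\ref{uf-r-2} (consistency). First I would fix a 3SAT instance with variables $v_1,\dots,v_n$ and clauses $c_1,\dots,c_m$ and build a disjunctive knowledge base $\K = (\O, \P)$ as follows. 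For each variable $v_i$, $\P$ contains the disjunctive choice rule $\bfK t_i, \bfK f_i \leftarrow$, and $\P$ also contains the self-loop rule $\bfK sat \leftarrow \bfK sat$, whose only purpose is to place $\bfK sat$ in $\KA(\K)$ and to block the trivial head-cut. The ontology $\O$ is a purely implicational (Horn) theory: for each clause $c_j$ and each literal occurring in $c_j$ it contains $x_\ell \supset s_j$, where $x_\ell$ is $t_i$ for a positive occurrence of $v_i$ and $f_i$ for a negative one and $s_j$ is a fresh ``clause-satisfied'' atom, together with the single rule $s_1 \land \dots \land s_m \supset sat$. I would test unfoundedness of $\bfK sat$ with respect to the dependable partition $(T, F) = (\emptyset, \emptyset)$.

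The argument then proceeds in three steps. First, since every formula of $\OBone{S}$ is a definite Horn clause or a positive fact, both consistency and the entailment $\OBone{S} \models a$ are decidable in polynomial time by forward chaining, discharging the hypothesis of the proposition. Second, I would compute $\Atmost(\emptyset,\emptyset)$ explicitly: the choice rules insert every $\bfK t_i$ and $\bfK f_i$, after which $\OBone{\{\bfK t_i, \bfK f_i \setcomp 1 \le i \le n\}}$ entails every $s_j$ (each clause contains at least one literal whose atom is now present) and hence entails $sat$, so $\Atmost(\emptyset,\emptyset) = \{\bfK t_i, \bfK f_i \setcomp 1 \le i \le n\} \union \{\bfK sat\}$; assigning every atom true satisfies all of $\O$, so $\Atmost(\emptyset,\emptyset) \union \pi(\O)$ is consistent, exactly as required. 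This already exhibits the approximation gap, since $\bfK sat \in \Atmost(\emptyset,\emptyset)$ regardless of the instance, even though $\bfK sat$ is genuinely unfounded whenever the instance is unsatisfiable. Third, I would establish the biconditional that $\bfK sat$ is unfounded with respect to $(\emptyset,\emptyset)$ if and only if the 3SAT instance is unsatisfiable. Because $T = F = \emptyset$, conditions~\textit{\ref{uf-conf2}} and~\textit{\ref{uf-conf3}} never fire, and condition~\textit{\ref{uf-conf1}} fires only for a head-cut that uses the self-loop rule (its body atom $\bfK sat$ lies in the tested set $X = \{\bfK sat\}$); moreover condition~\ref{uf-r-2} always holds since $\O$ is definite. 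Hence $\bfK sat$ fails to be unfounded precisely when some head-cut avoiding the self-loop rule satisfies $\head(R) \union \pi(\O) \models sat$. Such head-cuts pick at most one of $t_i, f_i$ per variable, i.e.\ they are partial truth assignments, and $\head(R) \union \pi(\O) \models sat$ holds exactly when every $s_j$ is entailed, i.e.\ when the assignment satisfies every clause. Thus a disproving head-cut exists if and only if a satisfying assignment exists, giving the desired equivalence, and coNP-hardness follows from the coNP-completeness of 3SAT unsatisfiability.

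The main obstacle is the second step together with the design tension it embodies: the reduction must keep $\Atmost(T,F) \union \pi(\O)$ consistent while still encoding a coNP-hard question, and these pull in opposite directions because $\Atmost$ greedily collects all disjunctive head atoms. The resolution is to ensure that clause violations only suppress the \emph{derivation} of $sat$ and never create an inconsistency, which is why $\O$ is kept definite and the clause constraints are phrased as implications into the fresh atoms $s_j$ rather than as integrity constraints as in \Proposition{nphard1}. A secondary point requiring care is verifying that the self-loop rule $\bfK sat \leftarrow \bfK sat$ is the only way condition~\textit{\ref{uf-conf1}} can be triggered, so that genuine assignment head-cuts are never spuriously disqualified and no other head-cut supporting $sat$ slips through; this is where I expect to spend the most attention. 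Once these are in place, the reduction establishes the claim.
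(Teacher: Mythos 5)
Your reduction is correct and follows the same skeleton as the paper's proof: reduce from 3SAT unsatisfiability, use the disjunctive choice rules $\bfK v_i^{t}, \bfK v_i^{f} \leftarrow$ together with the self-loop $\bfK sat \leftarrow \bfK sat$, and test $\bfK sat$ at $(\emptyset, \emptyset)$, so that head-cuts avoiding the self-loop correspond to truth assignments and condition \ref{uf-r-1} holds exactly for the satisfying ones. Where you genuinely differ is the ontology gadget: the paper uses a non-Horn encoding with ``unassigned'' markers $v_i^{u}$, exclusive-or constraints, and the auxiliary implications $(v_i^{f} \land v_i^{t}) \implies sat$, which forces a case analysis over total versus partial selections and over what it calls proper models; you instead use a definite Horn theory with clause-witness atoms $s_j$, so consistency of $\head(R) \union \OBone{T}$ is automatic, polynomial-time entailment is immediate by forward chaining, and partial assignments are handled for free (a partial head-cut entails $sat$ iff it already hits every clause, which happens iff a total satisfying assignment exists). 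Your version is cleaner and arguably more robust on the point this proposition hinges on, namely keeping $\Atmost(T, F) \union \pi(\O)$ consistent while preserving hardness. One small point worth making explicit in a final write-up: since every $\bfK t_i$ and $\bfK f_i$ is supported by a bodyless rule, no unfounded set can contain them, so the greatest unfounded set is a subset of $\{\bfK sat\}$ and testing the singleton $X = \{\bfK sat\}$ indeed decides whether $\bfK sat$ is an unfounded atom; the paper records the analogous observation in its own proof.
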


\begin{proof}
Let $SAT$ be an instance of 3SAT in conjunctive normal form such that $CLAUSE = \{ c_1, c_2, ..., c_n \}$ is the set of clauses in $SAT$, and $VAR =  \{ v_1, v_2, ..., v_n \}$ is the set of variables in $SAT$.
We construct a disjunctive hybrid \mknf knowledge base $\K = (\O, \P)$ s.t.
\begin{multline}
\begin{aligned}
\O = & \{ (v^{f}_{i} \lor v^{t}_{i}) \lxor v^{u}_{i} \setcomp \textrm{ for each $v_i \in VAR$ where $\lxor$ is exclusive-or } \} \union \\
     & \{ (v_i^{f} \land v_i^{t}) \implies sat \setcomp \textrm{ for each $v_i \in VAR$}  \} \union \\
     & \{ \left( (\bigwedge\limits_{c_i \in CLAUSE} clause_i) \land (\bigwedge\limits_{v_i \in VAR} \neg v^{u}_i) \right) \implies sat \setcomp \textrm{ where $clause_i$ is a formula}\\
     & \textrm{ obtained by replacing all occurrences of $v_i$ and $\neg v_i$ in $c_i$ with $v^{t}_{i}$ and $v^{f}_{i}$ respectively} \}
\end{aligned}
\end{multline}
and
\begin{multline}
\begin{aligned}
\P =  \{ \bfK sat \leftarrow \bfK sat  \} \union \bigcup \{ \{ (\bfK v_i^{t}, \bfK v_i^{f} \leftarrow) \} \setcomp v_i \in VAR \}
\end{aligned}
\end{multline}

Let $(T, F) = (\emptyset, \emptyset)$ and observe that $\Atmost(T, F) \union \pi(\O)$ is consistent ($\Atmost(T, F) = \KA(\K) \setminus \{ \bfK sat \}$).
We show that (1) For any \Katom{} $\bfK a$ and set of \Katoms{} $S$, the entailment relation $\OBone{S} \models a$ is computable in polynomial time and (2) that $\bfK sat$ is an unfounded atom of $\K$ w.r.t. $(\emptyset, \emptyset)$ if and only if $SAT$ is unsatisfiable.

(1) Observe that $\KA(\K) \union \pi(\O)$ is consistent, therefore, $S \union \pi(\O)$ is consistent for any set of \Katoms{} $S \subseteq \KA(\K)$.
The entailment relation $\OBone{S} \models v_i^{t}$ (resp. $\OBone{S} \models v_i^{f}$) holds if and only if $v_i^{t} \in S$ (resp. $v_i^{f} \in S$).
What remains to show is that $\OBone{S} \models sat$ can be checked in polynomial time when $\bfK sat \not\in S$.
We call a set of \Katoms{} $S$ consistent if it does not contain both $\bfK v_i^{t}$ and $\bfK v_i^{f}$ for every variable $v_i \in VAR$.
If $S$ is not consistent, then we have $\OBone{S} \models sat$ due to the second set of formulas in $\O$.
We assume that $S$ is consistent. 
We call a set of \Katoms{} $S$ total if it contains either $\bfK v_i^{t}$ or $\bfK v_i^{f}$ for each variable $v_i \in VAR$.
We consider the cases where $S$ is total and where $S$ is not total.
If $S$ is not total, we can construct a consistent first-order interpretation of $S \union \pi(\O)$ such that $v_i^{u}$ is true for some $v_i \in VAR$, thus $\OBone{S} \not\models sat$ if $S$ is consistent and not total.
Now we assume that $S$ is total and it follows that $\bigwedge\limits_{v_i \in VAR} \neg v^{u}_i$ is satisfied in the third set of formulas in $\O$.
We refer to a model $M$ of $S \union \pi(\O)$ as a proper model if for every $v_i \in VAR$ we have $v_i^{f}$ (resp. $v_i^{t}$) to be false in $M$ if $v_i^{f} \not \in S$ (resp. $v_i^{f} \not \in S$).
Observe that for all models of $S \union \pi(\O)$ modulo proper models, $sat$ is true because of the second set of formulas in $\O$ (recall that $S$ is total and consistent).
Note that for each proper model $M$ we have $M \models v_i^{f} \lxor v_i^{t}$ (where $\lxor$ is exclusive-or) because $S$ is consistent.
The only case where $\OBone{S} \not\models sat$ is if 
We have $\OBone{S} \models sat$ if and only if $S$ satisfies every formula $clause_i$.
This can easily be checked in polynomial time.

(2) When determining whether the \Katom{} $\bfK sat$ is unfounded w.r.t. $(\emptyset, \emptyset)$, we must consider each way to select a head-cut $R$.
This part of the proof carries out almost identically to part 2 of our proof of \Proposition{nphard1}.
We only outline the key differences:
Rather than relying on $\head(R) \union \pi(\O)$ to be inconsistent if $R$ does not correspond to a satisfying assignment of $SAT$ like in our proof of \Proposition{nphard1}, we rely on there being a single model of $\head(R) \union \pi(\O)$ where $sat$ is false (See (1) for details on proper models).
This is enough to show that $\head(R) \union \OBone{\emptyset} \not\models sat$.
When only considering proper models of $\head(R) \union \pi(\O)$, we can ignore the second set of formulas in $\O$ because a set of rule atom pairs $R$ containing both $(r, v_i^{f})$ and $(r, v_i^{t})$ is not a valid head-cut.
In order to determine whether a \Katom{} $\bfK a$ is unfounded w.r.t. $(\emptyset, \emptyset)$, we must exhaustively check $\head(R) \union \pi(\O)$ for every head-cut $R$ and can conclude that $SAT$ is unsatisfiable.
If we know that $SAT$ is unsatisfiable, there cannot exist a head-cut $R$ that proves that $\bfK a$ is not an unfounded atom.
\end{proof}

Intuitively, head-independence means that using multiple atoms from the head of a rule in conjunction with $\O$ cannot derive atoms that cannot be derived using only a single atom from the head of each rule.
The head-independence property is violated by the knowledge base in \Example{gus_operator_eg_1} and it ensures that $\Atmost(T, F) \not= \GUS(T, F)$.
If we were to alter the knowledge base such that the rule $\bfK a, \bfK b \leftarrow$ were changed to the pair of rules $\bfK a \leftarrow \Not b$ and $\bfK b \leftarrow \Not a$ then $\K$ has head-independence.
We show formally that for a head-independent knowledge base, the $\Z$ operator computes the greatest unfounded set w.r.t. $(T, F)$.

\section{A DPLL-Based Solver}
\label{dpll}
In this section we formulate a DPLL-based solver. 
First, 
we construct a well-founded operator $W_{\K}^{(T, F)}$ using the greatest unfounded set approximator from the previous section:
\begin{align*}
    T_{\K}^{(T, F)}(X, Y) &= \{ \bfK a \setcomp \textrm{ where } \OBone{T \union X} \models a \textrm{ for some } \bfK a \in \KA(\K) \} \union \\
                             \{ \bfK a \setcomp \textrm{ where } & \head(r) \setminus F = \{ \bfK a \} \textrm{ and } \body(r) \sqsubseteq (T \union X, F \union Y) \textrm{ for some } r \in \P \}) \\
    W_{\K}^{(T, F)}(X, Y) &= (T_{\K}^{(T, F)}(X, Y) \union T, (\KA(\K) \setminus Z_{\K}^{(T, F)}(X, Y)) \union F)
\end{align*}

We show that this operator maintains the property shown in \Proposition{notunfounded_is_T}.
\begin{proposition}\label{operator_maintains_extensions}
    If a dependable partition $(T, F)$ can be extended to an \mknf model $M$, then the dependable partition $\textbf{lfp }W_{\K}^{(T, F)}(X, Y)$ can also be extended to $M$.
\end{proposition}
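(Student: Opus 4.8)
The plan is to prove the stronger inclusion $\textbf{lfp}\,W_{\K}^{(T, F)} \sqsubseteq (T^*, F^*)$, where $(T^*, F^*)$ is the total dependable partition induced by the model $M$ extending $(T, F)$; this immediately gives that $\textbf{lfp}\,W_{\K}^{(T, F)}$ is dependable (a partial partition below a dependable one is dependable) and extends to $M$. Since the least fixed point is reached by iterating $W_{\K}^{(T, F)}$ from $(\emptyset, \emptyset)$, I would establish the invariant that every stage $(X, Y)$ of this iteration satisfies $(T \union X, F \union Y) \sqsubseteq (T^*, F^*)$, and then transfer the invariant to the limit. The base case holds because $(T, F) \sqsubseteq (T^*, F^*)$ by hypothesis, so the substance lies in the inductive step: assuming $(T \union X, F \union Y) \sqsubseteq (T^*, F^*)$, show that both components of $W_{\K}^{(T, F)}(X, Y)$ remain bounded by $(T^*, F^*)$. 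The argument is a stagewise refinement of the proof of \Proposition{notunfounded_is_T}.

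For the true component $T_{\K}^{(T, F)}(X, Y) \union T$, I would handle its two defining clauses separately. If $\bfK a$ is added because $\OBone{T \union X} \models a$, then from $T \union X \subseteq T^*$ we get $\OBone{T \union X} \subseteq \OBone{T^*}$, so $\OBone{T^*} \models a$ by monotonicity of entailment; since every interpretation in $M$ satisfies $\OBone{T^*}$, we obtain $M \mknfmodels \bfK a$ and hence $\bfK a \in T^*$. If $\bfK a$ is added by disjunctive unit propagation, it comes from a rule $r$ with $\head(r) \setminus F = \{ \bfK a \}$ whose body is applicable w.r.t. $(T \union X, F \union Y)$. All head atoms other than $\bfK a$ then lie in $F \subseteq F^*$, while applicability gives $\bodyp(r) \subseteq T^*$ and $\bfK(\bodyn(r)) \subseteq F^*$, so $M$ satisfies the body of $r$; as $M$ satisfies $r$, at least one head atom must lie in $T^*$, and the only remaining candidate is $\bfK a$, so $\bfK a \in T^*$.

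The main obstacle is the false component $(\KA(\K) \setminus Z_{\K}^{(T, F)}(X, Y)) \union F$. Because $F \subseteq F^*$ already, it suffices to show $\KA(\K) \setminus Z_{\K}^{(T, F)}(X, Y) \subseteq F^*$. The crucial fact, which is the soundness of the unfounded-set approximator developed around \Proposition{computes-approx}, is that $\KA(\K) \setminus Z_{\K}^{(T, F)}(X, Y)$ is always a subset of the greatest unfounded set of $\K$ with respect to the current dependable partition $(T \union X, F \union Y)$ (and in the degenerate case where the relevant objective knowledge is inconsistent this difference is empty, making the inclusion trivial). Since $(T \union X, F \union Y)$ is dependable and, by the induction hypothesis, lies below $(T^*, F^*)$ and hence extends to $M$, \Corollary{unfounded_is_F} applies and yields $M \mknfmodels \neg \bfK u$ for every $u$ in that unfounded set; equivalently, by \Proposition{notunfounded_is_T} and totality of $(T^*, F^*)$, every such $u$ belongs to $F^*$. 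I expect the delicate points to be verifying that the approximator is invoked with respect to the \emph{current} partition rather than the initial $(T, F)$ so that \Corollary{unfounded_is_F} can be applied at each stage, and confirming the dependability precondition of that corollary at every stage (which follows because sub-partitions of the dependable $(T^*, F^*)$ are dependable).

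Combining the two component arguments gives $W_{\K}^{(T, F)}(X, Y) \sqsubseteq (T^*, F^*)$, closing the inductive step. The last point needing care is the passage to the fixed point itself: the truth-deriving part $T_{\K}^{(T, F)}$ is monotone in its argument whereas the falsity-deriving part $\KA(\K) \setminus Z_{\K}^{(T, F)}$ is antitone, so the stages need not form a single ascending chain, and limit stages of the iteration must be treated explicitly. This is handled by observing that the region $\{ (X, Y) \setcomp X \subseteq T^* \text{ and } Y \subseteq F^* \}$ contains $(\emptyset, \emptyset)$, is mapped into itself by $W_{\K}^{(T, F)}$ (the inductive step just proved), and is closed under the unions and intersections used to form the least fixed point; consequently $\textbf{lfp}\,W_{\K}^{(T, F)}$ lies in this region, giving $\textbf{lfp}\,W_{\K}^{(T, F)} \sqsubseteq (T^*, F^*)$ and therefore extendability to $M$, as required.
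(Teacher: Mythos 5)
Your proposal is correct and follows essentially the same route as the paper's proof: the false component is handled by observing that the complement of the approximator's output is an unfounded set w.r.t.\ a partition extendable to $M$ and invoking Corollary~\ref{unfounded_is_F}, while the true component is handled by a direct contradiction with $M \mknfmodels \pi(\O)$ (for the entailment clause) and $M \mknfmodels \pi(\P)$ (for the unit-propagation clause). The only difference is that you carry the argument explicitly through the stages of the iteration to the least fixed point via an invariant region, and you flag the reading of $Z_{\K}^{(T,F)}(X,Y)$ needed for soundness; the paper's proof verifies only a single application of $W_{\K}^{(T,F)}$ to $(T,F)$ and leaves both points implicit.
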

\begin{proof}
    It follows from Corollary \ref{unfounded_is_F} that if $(T, F)$ can be extended an \mknf model $M$, then $(T, F \union Z_{\K}^{(T, F)}(T, F))$ can be extended to $M$.
    What's left to show is that if $(T, F)$ can be extended to an \mknf model $M$, then $(T \union T_{\K}^{(T, F)}(T, F)), F)$ can be extended to $M$.
    Suppose that there is some \Katom{} $\bfK a$ in $T \intersect T_{\K}^{(T, F)}(T, F))$ such that $M \not\mknfmodels \bfK a$.
    Then we either have that $\OBone{T} \models a$, and thus $M \not\mknfmodels \pi(\O)$ or that $M \not\mknfmodels \bfK h$ for each $\bfK h \in \head(r)$ and thus $M \not\mknfmodels \pi(\P)$. Either case contradicts the assumption that $M$ is an \mknf model of $\K$.
\end{proof}

Following Ji et al. \cite{ji_well_founded_2017}, we construct an abstract solver in Algorithm \ref{algo1} that prunes the search space for solving by using the $W_{\K}^{(T, F)}$ operator.
The \textit{CHECK-MODEL} procedure checks whether the \mknf interpretation
$$\{ I \setcomp \textrm{where $I \models \pi(\O)$, $I \models t$ for each $\bfK t \in T$, and $I \models f$ for each $\bfK f \in T$} \}$$
is an \mknf model of $\K$ whenever the solver reaches a total dependable partition.
This procedure is analogous to the NP-oracle required to check a model of a disjunctive logic program \cite{ben-eliyahu_propositional_1994}. Further developments are required for a more precise definition of this procedure. 
\medskip
\begin{algorithm}\label{algo1}
    \caption{$solver(\K, (T, F))$}
    $(T, F) \leftarrow W_{\K}(T, F) \sqcup (T, F)$\;
    \If{$T \intersect F \not= \emptyset$}{
        \Return false\;
    }
    \ElseIf{$T \union F = \KA(\K)$}{
        \If{CHECK-MODEL(T, F)}{
            \Return true\;
        }
        \Else{
            \Return false\;
        }
    }
    \Else{
        choose a \Katom{} $\bfK a$ from $\KA(\K) \setminus (T \union F)$\;
        \If{$solver(\K, (T \union \{ \bfK a \}, F))$} {
            \Return true\;
        }
        \Else{
            \Return $solver(\K, (T, F \union \{ \bfK a \}))$\;
        }
    }
\end{algorithm}

\begin{proposition}\label{solver-models}
    Given a partial partition $(T, F)$ of $\KA(\K)$, the invocation of Algorithm \ref{algo1} $solver(\K, (\emptyset, \emptyset))$ will return $true$ if $(T, F)$ can be extended to an \mknf model of $\K$.
\end{proposition}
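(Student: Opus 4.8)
The plan is to prove the direct completeness statement by establishing a stronger invariant: for \emph{every} dependable partition $(T, F)$ that can be extended to an \mknf model, the call $solver(\K, (T, F))$ returns \emph{true}. The proposition then follows by instantiating $(T, F) = (\emptyset, \emptyset)$, which is dependable and, since the empty partition lies below the partition induced by any \mknf model, can be extended to a model precisely when some partition in the hypothesis can. I would argue the invariant by induction on the number of unassigned \Katoms{}, $n = |\KA(\K) \setminus (T \union F)|$, which strictly decreases at each recursive call because the algorithm commits one previously unassigned \Katom{} before recursing (and the propagation step only commits more).

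For the inductive step, fix a dependable partition $(T, F)$ that can be extended to an \mknf model $M$ inducing the total partition $(T^*, F^*)$, so $(T, F) \sqsubseteq (T^*, F^*)$. The first step is to track the propagation line $(T, F) \leftarrow W_{\K}(T, F) \sqcup (T, F)$: reading this as producing $\textbf{lfp } W_{\K}^{(T, F)}$, \Proposition{operator_maintains_extensions} tells us the resulting partition $(T', F')$ can still be extended to $M$, hence $(T', F') \sqsubseteq (T^*, F^*)$. In particular $T' \subseteq T^*$ and $F' \subseteq F^*$ with $T^* \intersect F^* = \emptyset$, so the conflict test $T' \intersect F' \not= \emptyset$ fails and the algorithm does not return \emph{false} there.

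Next I split on whether $(T', F')$ is total. If $T' \union F' = \KA(\K)$, then $(T', F')$ is total and below the total partition $(T^*, F^*)$, forcing $(T', F') = (T^*, F^*)$; since $M$ is an \mknf model inducing exactly this partition, CHECK-MODEL returns \emph{true}, and so does the solver. If $(T', F')$ is not total, the solver chooses some unassigned \Katom{} $\bfK a$, and since $(T^*, F^*)$ is total, either $\bfK a \in T^*$ or $\bfK a \in F^*$. If $\bfK a \in T^*$, then $(T' \union \{ \bfK a \}, F') \sqsubseteq (T^*, F^*)$ can be extended to $M$, so by the induction hypothesis the first recursive call returns \emph{true} and the solver returns \emph{true}. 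If $\bfK a \in F^*$, the first recursive call $solver(\K, (T' \union \{ \bfK a \}, F'))$ may succeed (in which case we are done) or fail; if it fails, the algorithm falls through to $solver(\K, (T', F' \union \{ \bfK a \}))$, and since $(T', F' \union \{ \bfK a \}) \sqsubseteq (T^*, F^*)$ can be extended to $M$, the induction hypothesis yields \emph{true}.

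The main obstacle I anticipate is the bookkeeping around dependability and the behavior of CHECK-MODEL at the leaves. I must confirm that every partition handed to a recursive call is dependable, so that \Proposition{operator_maintains_extensions} and the unfounded-set machinery apply; this follows because every partition below a model-induced partition is dependable, as noted in the preliminaries. I must also justify that CHECK-MODEL returns \emph{true} when the total partition coincides with one induced by an \mknf model, which I take from the specification of that procedure. The recursive descent itself is routine once the invariant ``the current partition can be extended to $M$'' is shown to be preserved by the propagation step (via \Proposition{operator_maintains_extensions}) and by the correct branch of the case split; the only delicate point is that the ``wrong'' branch (trying $\bfK a$ in $T'$ first when $\bfK a \in F^*$) is permitted to fail, which is harmless because the algorithm then explores the complementary branch that the induction hypothesis handles.
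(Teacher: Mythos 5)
Your proposal is correct and follows essentially the same route as the paper: both arguments hinge on \Proposition{operator_maintains_extensions} to show the propagation step preserves extendability to a model, and on the exhaustiveness of the two-way branching to cover the rest. Your write-up is simply a more explicit, inductive rendering of the paper's terser observation that the $W_{\K}$ operator ``only prunes the search space.''
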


\begin{proof}
It follows from \Proposition{operator_maintains_extensions} that the extension of $(T, F)$ on the first line of the algorithm, $(T, F) \leftarrow W_{\K}(T, F)$, does not miss any models.
No models exist that induce a partition $(T, F)$ s.t. $T \intersect F \not=\emptyset$.
Without the use of the $W_{\K}(T, F)$ operator, the solver algorithm will explore every partition $(T, F) \subseteq \KA(\K) \times \KA(\K)$ where $T \intersect F = \emptyset$.
Thus, the usage of the $W_{\K}(T, F)$ operator simply prunes the search space. 
\end{proof}

Given \Proposition{solver-models}, it is easy to modify Algorithm \ref{algo1} to report models instead of returning a boolean value.

We have identified some fundamental challenges in computing unfounded sets for hybrid \mknf knowledge bases that make the problem intractable. The operator constructed by Ji et al. \cite{ji_well_founded_2017} computes a subset of the greatest unfounded set and we build on this approximation with an extension for programs with rules with disjunctive heads. 

\section{Related Work}
\label{related}

Ji et al. establish a definition of unfounded sets for normal hybrid \mknf knowledge bases and construct well-founded operators that can be directly embedded in a solver \cite{ji_well_founded_2017}.
We extend their work by introducing a definition of unfounded sets that handles disjunctive rules, rules that have multiple \Katoms{} in their heads.
Our extension borrows from the unfounded-set techniques outlined by Leone et al. \cite{leone_disjunctive_1997} for disjunctive logic programs but with a few noteworthy differences. Namely, our definition cannot be used directly for model-checking.
If the ontology in $\K$ is empty, our definition is equivalent to Leon et al.'s for unfounded-free partitions. Similarly, if $\K$ is a normal knowledge base, our definition is equivalent to Ji et al.'s definition.

Both Ji et al. and Leone et al. outline abstract solvers for finding models of their respective languages.
These solvers follow the DPLL paradigm of exploring the search space for a model.
Both solvers substantially prune their search space using unfounded sets.
Because the complexity of model-checking a disjunctive hybrid MKNF knowledge bases is greater than that of normal hybrid MKNF knowledge bases \cite{motik_reconciling_2010}, our abstract solver in this work consults a model checker after a total interpretation has been guessed. This differs from the solver described by Ji et al. which does not rely on a model checker \cite{ji_well_founded_2017}.
Leone et al.'s solver does not deepen its search on partial interpretations that assign unfounded atoms as true (partitions that cannot be extended to models) \cite{leone_disjunctive_1997}.
This aggressive pruning strategy requires, at each step of the solver, an invocation of an algorithm with a complexity of $\Delta_{2}^{P}[O(log\textrm{ }n)]$ \cite{leone_disjunctive_1997}.
Industry-grade solvers, such as Clingo \cite{gebser_conflict-driven_2012} or HEX \cite{eiter_computational_1995}, recognize the impracticality of enumerating all unfounded sets many times during the solving process and these solvers introduce approximations techniques. As a caveat of using approximations of unfounded sets, a solver may deepen its search on partial interpretations that cannot be extended to models.
Because we rely on approximations of greatest unfounded sets, we think it is reasonable for our solver to employ similar strategies used by practical solvers and include some partitions that cannot be extended to models in its search.

Both Clingo and HEX have additional support for external atoms, atoms whose truth is dependant on external sources.
Clingo 5 defines $\mathbb{T}$-stable semantics \cite{gebser_theory_2016} to reason about external atoms via external theories.
HEX defines semantics for external atoms using boolean functions that take a total interpretation as input \cite{eiter_computational_1995}.
For any hybrid MKNF knowledge base, models of the accompanying ontology must be monotonic \cite{motik_reconciling_2010}.
While it may be possible to encode the semantics of hybrid MKNF knowledge bases using either the HEX or Clingo extensions,
neither solution exploits the monotonicity of external sources and both support nonmonotonic models of the external theories.

\section{Conclusion}
\label{conclusion}

We've provided a definition of unfounded sets for disjunctive hybrid \mknf knowledge bases, studied its properties, and formulated an operator to compute a subset of the greatest unfounded set of a knowledge base. This leads to a DPLL-based solver where after each decision constraint propagation is carried out by computing additional true and false atoms on top  of the current partial partition.
Our methods can be directly embedded into a solver for a drastic increase in efficiency when compared to a guess-and-verify solver, the current state of art for reasoning with disjunctive hybrid \mknf knowledge bases.
The addition of ontologies to answer set programs brings new challenges, namely, there is a complexity increase in computing unfounded sets even in the case of normal hybrid MKNF knowledge bases.
We leave computing unfounded sets in light of inconsistencies that arise because of $\O$ to future work.

\printbibliography

\end{document}